\documentclass{article}
\usepackage{iclr2027_conference,times}
\usepackage{amsmath,amssymb,amsthm,booktabs,array}
\usepackage{xurl,hyperref}
\PassOptionsToPackage{table}{xcolor}
\usepackage[most]{tcolorbox}
\usepackage{listings}
\usepackage{booktabs}
\usepackage[table]{xcolor}
\usepackage{adjustbox}
\usepackage{makecell}
\usepackage{graphicx}
\usepackage{wrapfig}
\usepackage{caption}
\usepackage{microtype}
\usepackage{algorithm}
\usepackage{algpseudocode}

\definecolor{oursblue}{HTML}{F3F7FB}
\definecolor{headergray}{HTML}{F7F7F7}

\newcommand{\best}[1]{\textbf{#1}}
\newcommand{\second}[1]{\underline{#1}}
\usepackage{booktabs,multirow,graphicx,amsmath}
\newcommand{\sd}[1]{{\scriptsize$\pm$#1}}
\newcommand{\up}{$\uparrow$}
\newcommand{\dn}{$\downarrow$}
\definecolor{PromptInk}{HTML}{243B53}
\definecolor{PromptPaper}{HTML}{F3F6FA}
\definecolor{PromptEdge}{HTML}{CBD5E1}
\newtcolorbox{promptbox}[2][]{enhanced,breakable,lines before break=8,colback=PromptPaper,
 colframe=PromptEdge,boxrule=0.5pt,arc=2mm,left=9pt,right=9pt,
 top=10pt,bottom=8pt,before skip=15pt,after skip=10pt,
 fontupper=\small,title={#2},fonttitle=\small\sffamily\bfseries,
 coltitle=white,attach boxed title to top left={xshift=8pt,yshift=-2mm},
 boxed title style={colback=PromptInk,colframe=PromptInk,arc=1.2mm,
 left=6pt,right=6pt,top=3pt,bottom=3pt},#1}
\lstdefinestyle{prompttext}{basicstyle=\small\ttfamily,
 breaklines=true,breakatwhitespace=false,columns=fullflexible,
 keepspaces=true,showstringspaces=false,aboveskip=5pt,belowskip=3pt}
\newcommand{\promptrole}[1]{\par\smallskip\noindent\textbf{\textcolor{PromptInk}{#1}}\par\smallskip}
\hypersetup{hidelinks}
\newtheorem{theorem}{Theorem}
\newtheorem{proposition}[theorem]{Proposition}
\newtheorem{corollary}[theorem]{Corollary}
\newtheorem{lemma}[theorem]{Lemma}

\newcommand{\Var}{\operatorname{Var}}
\newcommand{\Cov}{\operatorname{Cov}}
\newcommand{\STOP}{\mathtt{STOP}}

\newcommand{\pos}[1]{\left(#1\right)_{+}}
\usepackage{graphicx}   

\usepackage{multirow}

\title{UpliftMem: Learning Set-Level Uplift for Agent Memory Retrieval}

\author{Mengkun Liang, Haoran Qiang, Guannan Liu\thanks{Corresponding author: \texttt{liugn@buaa.edu.cn}.},  Junjie Wu \\
MIIT Key Laboratory of Data and Decision Intelligence \\
Beihang University \\
Beijing, China \\
\texttt{\{mengkun,BY2308105,liugn,wujj\}@buaa.edu.cn}
}
\iclrfinalcopy

\begin{document}
\maketitle
\lhead{Under review as a conference paper at ICLR 2027}
\begin{abstract}
Large language model (LLM) agents reuse external memory to guide new tasks, but effective retrieval requires learning which memory sets improve execution. Such learning relies on costly outcome feedback: ordinary retrieval observes only executed sets, while evaluating alternatives requires additional rollouts. We introduce \textsc{UpliftMem}, which learns memory retrieval from set-level execution uplift relative to the same executor without memory. A theoretical analysis of how retrieval preferences restrict feedback coverage motivates targeted probing of alternative memory sets. Probe selection follows an expected value of sample information (EVSI) criterion, derived in closed form under a correlated Gaussian model, to allocate limited training rollouts according to their expected improvement in local retrieval decisions. The shared scorer is trained with a frozen executor and selects memory sets without test-time probes. Across ALFWorld, WebShop, and BigCodeBench, \textsc{UpliftMem} achieves the best success rates among evaluated baselines on the main evaluation sets. Controlled fixed-store and matched probe budget evaluations further demonstrate improved memory-use decisions and more effective use of execution feedback. 
\end{abstract}

\section{Introduction}
\label{sec:introduction}

\begingroup
\setlength{\intextsep}{6pt}
\setlength{\columnsep}{12pt}
\begin{wrapfigure}{r}{0.35\textwidth}
    \centering
    \includegraphics[width=\linewidth]{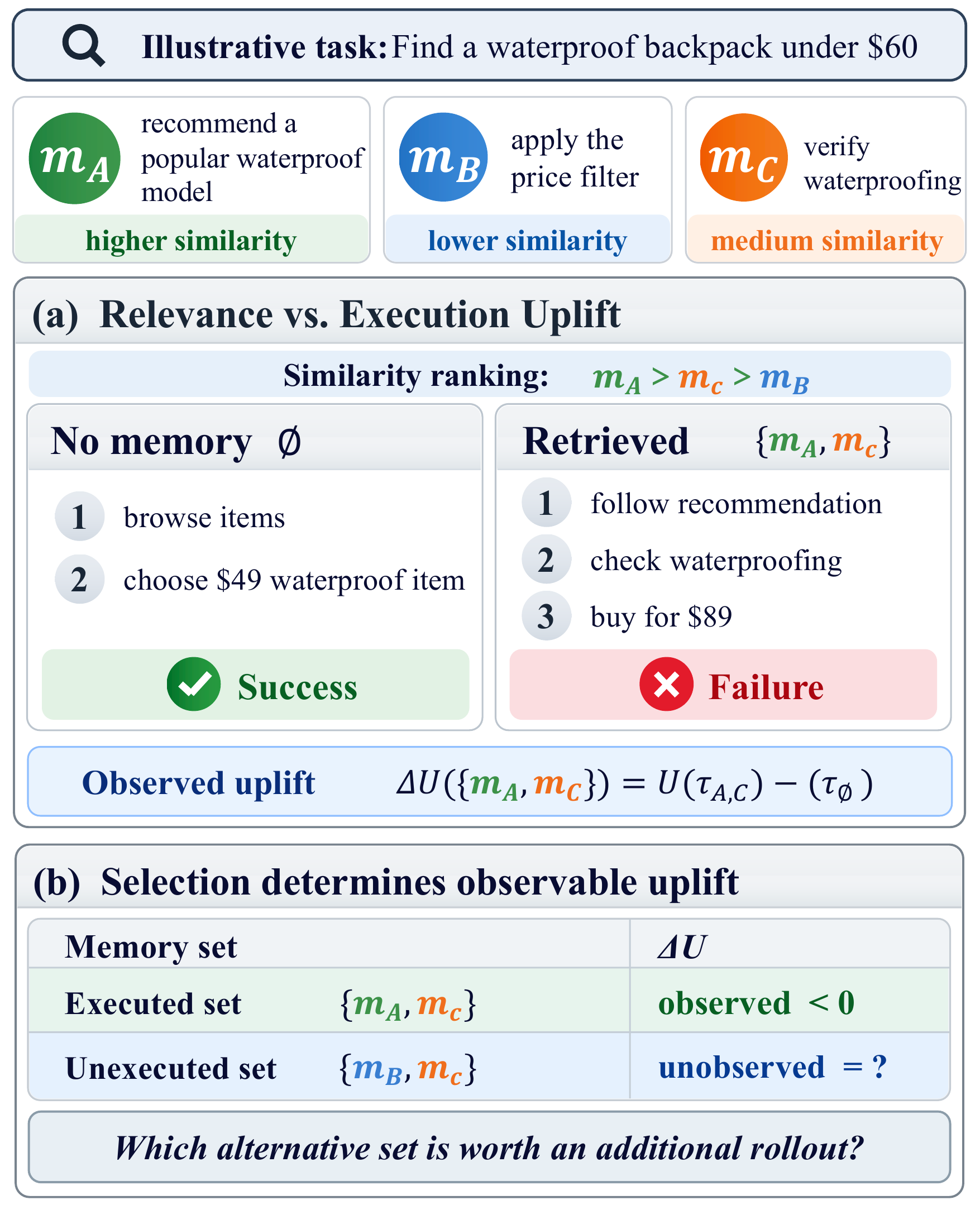}
    \captionsetup{font=scriptsize,labelfont=normalfont,labelsep=period,skip=2pt}
    \caption{Learning memory retrieval from set-level execution feedback. (a) A relevant memory set may reduce utility relative to no-memory execution. (b) Only executed sets receive direct uplift labels; evaluating alternatives requires additional rollouts.}
    \label{fig:intro_memory}
\end{wrapfigure}

LLM-based agents retrieve past experience to guide new tasks \citep{shinn2023reflexion,zhao2024expel}, but relevant memories do not necessarily improve execution. Figure~\ref{fig:intro_memory}(a) illustrates this mismatch: when asked to find a waterproof backpack under \$60, the agent succeeds without memory, yet follows a retrieved recommendation to purchase an \$89 item. This failure motivates learning retrieval from execution feedback, with supervision that reflects whether supplying memory improves the same agent's execution on the same task.

Building on outcome-aware and uplift-based retrieval \citep{zhang2026memrl,qu2025upliftrag,tu2026d2skill}, we define execution uplift as the expected utility difference between execution with complete memory and execution without memory under the same executor. Observed utility differences supply the signal, distinguishing additional memory benefit from success attributable to the executor alone. Memories may provide complementary, redundant, or conflicting guidance \citep{xu2025scarlet}. A memory's effect on execution can thus depend on the other memories used alongside it. To capture these interactions, we model execution uplift at the level of complete memory sets.

Obtaining this supervision, however, requires costly agent executions. Each new label requires executing a candidate memory set and comparing its utility with a no-memory reference. Ordinary retrieval supplies direct feedback only for the set it executes. In Figure~\ref{fig:intro_memory}(b), the negative uplift observed for $\{m_A,m_C\}$ provides no direct label for $\{m_B,m_C\}$. In other words, \emph{if a memory set is never selected, its actual execution uplift remains unverified by direct observations.}
Consequently, an underestimated set may remain rarely selected and receive little feedback to correct that estimate. Obtaining labels for untested sets requires additional rollouts, but the number of feasible combinations makes exhaustive labeling impractical under a limited budget.

The goal is therefore to learn effective memory retrieval by allocating a limited rollout budget to useful execution feedback.
To address this problem, we introduce \textsc{UpliftMem}, which couples set-level uplift learning with execution-feedback acquisition under a limited rollout budget. Our theoretical analysis characterizes how retrieval preferences restrict direct-feedback coverage. This finding motivates additional training executions, called probes, that provide feedback for alternative memory sets ordinary retrieval may leave untested. For local memory-set comparisons, the analysis further yields a closed-form expression for the expected value of sample information (EVSI) under a correlated Gaussian surrogate \citep{frazier2009knowledge}. The resulting decision-aware acquisition rule combines resolvable uncertainty with predicted utility differences to prioritize observations expected to improve retrieval decisions. Correlations in the surrogate allow one observation to inform multiple set comparisons. The acquired outcomes then supervise a shared value model that evaluates each candidate addition in the context already selected, while no-memory execution serves as both the reference and an explicit alternative.

With executors backed by Qwen3-4B and Qwen3-8B \citep{yang2025qwen3}, \textsc{UpliftMem} achieves the best success rates among the evaluated systems on both ALFWorld splits \citep{shridhar2021alfworld}, WebShop \citep{yao2022webshop}, and the overall BigCodeBench evaluation set \citep{zhuo2025bigcodebench}. 
Fixed-store evaluations on MemSyco-Bench \citep{xiang2026memsyco} isolate the contribution of retrieval by holding memory content and the executor unchanged. 
Relative to native retrieval, decision accuracy averaged over five memory stores and the three scenarios assessing when to use memory increases from 41.7\% to 66.5\% with Qwen3-4B and from 44.4\% to 68.8\% with Qwen3-8B.

    
    

\par
\endgroup


\section{Related Work}
\label{sec:related_work}


\subsection{Experience-Based Agent Memory}

Agent memory has evolved from retaining execution feedback and trajectories to distilling reusable procedural and reasoning guidance \citep{shinn2023reflexion,zhao2024expel,wang2024voyager,fang2026memp,ouyang2026reasoningbank}. Recent systems further organize, revise, and selectively maintain stored experience using later interactions or learned memory-management policies \citep{xu2025amem,fang2025lightmem,zhou2025memento,yan2026memoryr1}. As memory becomes an adaptive component of the agent, its benefit increasingly depends on which parts of accumulated experience are supplied to the executor for a particular task.


\subsection{Learning to Retrieve Agent Memory}

Memory retrieval has correspondingly moved beyond semantic relevance toward downstream usefulness. Reasoning-aware and utility-based methods learn which memories are likely to improve task performance, while uplift-style and paired-execution approaches estimate contribution relative to behavior without external guidance \citep{li2026memreranker,zhang2026memrl,qu2025upliftrag,tu2026d2skill}. When several memories are used together, their effects can also depend on the surrounding context, motivating shared-context attribution and joint adaptation of retrieved knowledge \citep{xu2025scarlet,li2026skillaligner}.

Outcome-based retrieval further changes how training evidence is obtained. Direct outcome labels are available only for memory sets that are executed, while evaluating alternatives requires additional rollouts. Related work on counterfactual learning and active acquisition studies how selectively observed or costly feedback should be used for learning and decision making \citep{swaminathan2015counterfactual,roy2001optimal,houlsby2011bayesian,kirsch2019batchbald,frazier2009knowledge}. UpliftMem formulates these coupled challenges as a learning problem, where complete memory sets are valued by execution uplift and additional rollouts are allocated according to their expected value for improving retrieval decisions.

\section{Preliminaries}
\label{sec:preliminaries}

\subsection{Interactive Agent with External Memory}

Following prior memory-augmented agent architectures~\citep{sumers2024coala,zhao2024expel,wang2024voyager}, we consider an interactive agent comprising a memory controller and an executor. The controller manages an external pool of reusable experience through memory retrieval and updates, while the executor interacts with the environment to complete tasks. Within the controller, the retriever selects memory sets to support execution.

At episode $t$, the retriever receives a task context $c$ and a memory pool $\mathcal M_t$ of self-contained experience units $m$. It selects a subset $S\subseteq\mathcal M_t$ as auxiliary context for the executor. The controller then uses the resulting trajectory and execution feedback to write new memory candidates, revise them, and determine their admission to the next pool $\mathcal M_{t+1}$.

\subsection{Problem Formulation}
\label{sec:problem}
Task performance alone cannot distinguish equally successful agent executions with substantially different resource requirements \citep{kapoor2024agents}. We therefore incorporate execution cost to make resource efficiency an explicit component of utility. For an execution trajectory $\tau$, let $R(\tau)\in[0,1]$ denote task performance and $C(\tau)\in[0,C_{\max}]$ denote the cost assigned by the benchmark protocol, with $C_{\max}>0$ as the cap. We define execution utility as
\begin{equation}
U(\tau)=R(\tau)\left[
1+\gamma\mathbb I[R(\tau)\geq\rho]
\left(1-\frac{C(\tau)}{C_{\max}}\right)
\right],
\label{eq:task_utility}
\end{equation}
where $\mathbb I[\cdot]$ is the indicator function, $\gamma\geq0$ weights the efficiency bonus, and $\rho\in[0,1]$ sets the performance threshold. Performance determines the base utility, and lower cost increases the bonus once the threshold is met. 


For task $c$ executed with memory set $S$, the expected utility is $V(c,S)=\mathbb E[U(\tau(c,S))]$. The retriever uses a learnable set-value model $G_\theta(c,S)$ to predict the expected utility gain relative to no-memory execution. Its output $S_\theta(c,\mathcal M_t)$ satisfies the retrieval constraints, with the empty set allowed. For a given memory pool $\mathcal M_t$, the downstream learning objective is
\begin{equation}
\theta^*
\in\arg\max_\theta
\mathbb E\left[
V\bigl(c,S_\theta(c,\mathcal M_t)\bigr)
\right].
\label{eq:retrieval_objective}
\end{equation}

\section{Methodology}
\label{sec:method}

As shown in Figure~\ref{fig:overview}, our trainable retriever is parameterized by $G_\theta(c,S)$, which scores complete memory sets by their predicted execution uplift. Similarity-based recall supplies candidate memories, and conditional gains computed from $G_\theta$ guide sequential set construction. Because set values are learned from execution outcomes, retrieval also determines which memory sets receive direct feedback, leaving disfavored alternatives poorly observed unless additional rollouts are performed. Under a limited rollout budget, \textsc{UpliftMem} therefore uses expected value of sample information (EVSI) to probe alternative sets whose outcomes are expected to improve retrieval decisions.

The retrieved set and selected probes are executed by the same frozen agent and compared with no-memory execution to obtain uplift observations. Retrieved executions supervise both set value and the sampled retrieval path, whereas probes provide additional value observations. After value-only warm-up, these signals jointly update the shared uplift model. Completed trajectories are also distilled into memory candidates, which are validated and revised when needed before admission to the memory pool. Complete training and inference procedure is shown in Algorithm~\ref{alg:training_flow}.

\begin{figure}[t]
\centering
\includegraphics[width=0.9\linewidth]{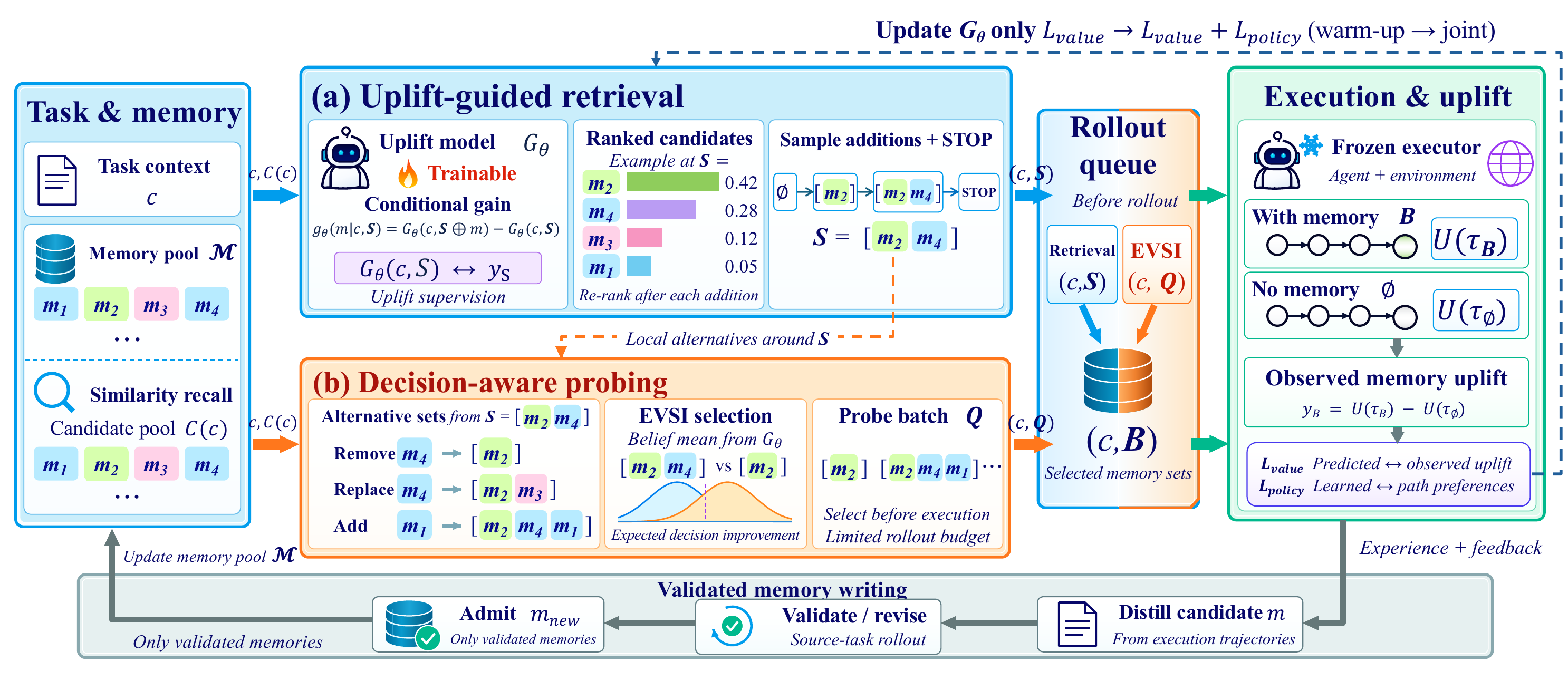}
\caption{Overview of \textsc{UpliftMem}. A shared scorer $G_\theta$ guides retrieval and EVSI-based probing, trained with value and policy supervision while the executor remains frozen. Validated writing supplies additional labels and expands the memory pool.}
\label{fig:overview}
\end{figure}

\subsection{Decision Value of Memory Sets}
\label{sec:uplift}


Motivated by outcome-aware retrieval methods that measure the benefit of external content relative to model-only behavior \citep{qu2025upliftrag,tu2026d2skill}, we use the same executor without external memory as the reference and define the uplift of a memory set \(S\) as
\begin{equation}
\Delta(c,S)=V(c,S)-V(c,\emptyset).
\label{eq:memory_uplift}
\end{equation}
Accordingly, we train a shared uplift model $G_\theta(c,S)$ to predict expected set uplift, supervised by the observed utility difference from paired executions with and without \(S\),
\begin{equation}
\label{eq:observed_uplift}
y_S = U\bigl(\tau(c,S)\bigr)-U\bigl(\tau(c,\emptyset)\bigr).
\end{equation}

Because $V(c,\emptyset)$ is independent of $\theta$, replacing $V$ with $\Delta$ in Eq.~\ref{eq:retrieval_objective} does not change the optimal parameters. The learned model $G_\theta$ approximates this uplift, with $G_\theta(c,\emptyset)=0$ as the no-memory reference.
Each observation corresponds to the complete memory set used during execution. The set-valued formulation follows
consistent memory construction and evaluation rules, with the search and scoring details provided in Appendices~\ref{app:search} and~\ref{app:scorer}.
Appendix~\ref{app:observation} distinguishes uncertainty about expected uplift from variability in its observed labels.

\subsection{Set Construction and Execution Feedback}
\label{sec:retrieval}
For each task context \(c\), similarity-based recall first constructs a candidate memory pool \(\mathcal C(c)\subseteq\mathcal M_t\). Starting from $S=\emptyset$, the retriever evaluates an additional memory through its conditional gain,
\begin{equation}
g_\theta(m\mid c,S)=G_\theta(c,S\cup\{m\})-G_\theta(c,S),
\qquad g_\theta(\STOP\mid c,S)=0.
\label{eq:conditional_gain}
\end{equation}
The action set $\mathcal A(c,S)$ contains feasible memories in $\mathcal C(c)\setminus S$ and the stopping action $\STOP$, which returns the current set. Retrieval also terminates at the maximum memory count or when no feasible addition remains. 
During training, the next action $a \in  \mathcal A$ is sampled according to the conditional gains:
\begin{equation}
\pi_\theta(a\mid c,S)
=
\frac{\exp(g_\theta(a\mid c,S)/\tau_{\mathrm{tr}})}
{\sum_{a'\in\mathcal A(c,S)}
\exp(g_\theta(a'\mid c,S)/\tau_{\mathrm{tr}})},
\label{eq:retrieval_policy}
\end{equation}
where \(\tau_{\mathrm{tr}}>0\) controls the sampling concentration. 
At test time, deterministic beam search of width two returns the highest-valued set retained across depths, including $\emptyset$. It can retain intermediate sets with negative conditional gains so that later additions can reveal complementarity. 
Appendix~\ref{app:search} specifies feasibility, deduplication, and stopping.


Under this retrieval strategy, only the selected terminal set receives direct execution feedback, while alternative memory sets may remain unobserved and lack corrective supervision. The resulting dependence between retrieval preference and feedback coverage is characterized by the following proposition.

\begin{proposition}[Selective retrieval suppresses alternative observations]
\label{prop:passive_observation}
Fix a task $c$, its candidate pool, retrieval constraints, and model parameters. Let $\mathcal P(B)$ denote all feasible retrieval paths that return a memory set $B$. For each path $\mathbf a=(a_1,\ldots,a_L)$ of $L$ sampled actions, let $a_t$ denote its $t$-th action and $S_{t-1}$ the preceding memory set. Define the cumulative action deficit as
\begin{equation}
d(\mathbf a)=\sum_{t=1}^{L}\left[\max_{a\in\mathcal A(c,S_{t-1})}g_\theta(a\mid c,S_{t-1})-g_\theta(a_t\mid c,S_{t-1})\right].
\label{eq:path_deficit}
\end{equation}
The probability $p_B$ that ordinary retrieval returns $B$ satisfies
\begin{equation}
p_B\leq\sum_{\mathbf a\in\mathcal P(B)}\exp\left(-\frac{d(\mathbf a)}{\tau_{\mathrm{tr}}}\right).
\label{eq:passive_probability_bound}
\end{equation}
\end{proposition}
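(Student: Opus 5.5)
The plan is to decompose $p_B$ over retrieval paths and bound each path's probability by a product of per-step softmax probabilities. Ordinary retrieval during training is the Markov sampling process defined by Eq.~\ref{eq:retrieval_policy}: starting from $S_0=\emptyset$, each action is drawn from $\pi_\theta(\cdot\mid c,S_{t-1})$, and the process ends on $\STOP$, at the maximum memory count, or when no feasible addition remains. The event that $B$ is returned is therefore the disjoint union, over $\mathbf a\in\mathcal P(B)$, of the events that exactly the path $\mathbf a$ is sampled. For a path with forced termination, the final step is deterministic and contributes either probability one or is simply not counted among the $L$ sampled actions. This gives
\begin{equation*}
p_B=\sum_{\mathbf a\in\mathcal P(B)}\prod_{t=1}^{L}\pi_\theta(a_t\mid c,S_{t-1}),
\end{equation*}
with the convention that paths are distinguished by their ordered action sequence, so distinct orderings that reach the same set are counted separately.

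The key step is a per-step bound. Write $g^\star_t=\max_{a\in\mathcal A(c,S_{t-1})}g_\theta(a\mid c,S_{t-1})$. The softmax denominator is a sum of positive terms, so it is at least its largest term, $\exp(g^\star_t/\tau_{\mathrm{tr}})$. Hence
\begin{equation*}
\pi_\theta(a_t\mid c,S_{t-1})\le\exp\!\left(-\frac{g^\star_t-g_\theta(a_t\mid c,S_{t-1})}{\tau_{\mathrm{tr}}}\right).
\end{equation*}
Multiplying over $t=1,\dots,L$ turns the product into the exponential of minus the sum of the per-step deficits, which is exactly $\exp(-d(\mathbf a)/\tau_{\mathrm{tr}})$ by Eq.~\ref{eq:path_deficit}. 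Summing over $\mathcal P(B)$ then yields Eq.~\ref{eq:passive_probability_bound}.

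The only delicate point is the bookkeeping in the first step, not the inequality itself. I must check that the sampled actions along a path are exactly those counted in $d(\mathbf a)$. A terminal $\STOP$ is a sampled action with gain $0$ and is included. Termination forced by the cardinality cap or by exhausting feasible additions involves no sampling and contributes factor one. I must also check that path events are disjoint, which holds because two distinct action sequences differ at some first step, where they correspond to different outcomes of the same draw. Since the deficits are nonnegative, any slack in this accounting, such as counting an infeasible path, only enlarges the right-hand side. So the bound holds regardless, and the proof reduces to the single softmax inequality above.
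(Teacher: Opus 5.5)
Your proposal is correct and matches the paper's proof: the bound $\pi_\theta(a_t\mid c,S_{t-1})\le\exp(-\delta_t/\tau_{\mathrm{tr}})$ comes from the maximizing term in the softmax denominator, you multiply it along each path, and you sum over the disjoint complete paths in $\mathcal P(B)$, with forced terminations contributing no sampled action. One small correction: extra paths in the sum can only enlarge the right-hand side because each $\exp(\cdot)$ term is positive, not because the deficits are nonnegative.
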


The bound shows how locally disfavored actions suppress direct feedback for the sets they construct, limiting correction of their estimated values. Appendices~\ref{app:selective}--\ref{app:changing_policy} extend this result to evidence arrival, identification, and changing policies, motivating targeted executions of informative alternatives.

\subsection{Decision-Aware Probing}
\label{sec:probing}
To address limited feedback for sets disfavored by retrieval (Section~\ref{sec:retrieval}), \textsc{UpliftMem} probes alternative memory sets during training.
Under a limited rollout budget, an observation is valuable when it can improve a retrieval comparison, following the value-of-information principle \citep{howard1966information,chaloner1995bayesian}. 

Because memory-set values are statistically related, one observation can reduce uncertainty across several retrieval comparisons. Following correlated-belief knowledge-gradient models \citep{frazier2009knowledge}, we use a local Gaussian surrogate conditioned on the collected uplift observations $\mathcal D$,
\begin{equation}
\widetilde{G}(c,S)
=
G_\theta(c,S)+\varphi(c,S)^\top\xi,
\qquad
\xi\sim\mathcal N(0,\Sigma).
\label{eq:belief}
\end{equation}
Here $G_\theta$ is the conditional mean uplift prediction given \(\mathcal D\), $\varphi(c,S)\in\mathbb R^d$ is a task-set feature vector of dimension $d$, $\xi\in\mathbb R^d$ is a zero-mean Gaussian latent vector, and $\Sigma\in\mathbb R^{d\times d}$ is its positive-semidefinite covariance matrix. 
The surrogate describes uncertainty about expected uplift, while a separate noise model describes variability in execution labels. Appendix~\ref{app:observation} defines the measurement model, including shared no-memory references.

For a local retrieval decision \(j\) comparing two candidate sets \(S_j^a\) and \(S_j^b\), we define their uncertain uplift difference as \(
\widetilde{g}_j
=
\widetilde{G}(c,S_j^a)
-
\widetilde{G}(c,S_j^b),
\) with conditional mean \(\mu_j=\mathbb E[\widetilde{g}_j\mid\mathcal D]\).
A prospective probe collection \(Q\) would produce additional observations \(y_Q\). The uncertainty these observations can resolve about decision \(j\) is
\begin{equation}
s_j^2(Q)
=
\operatorname{Var}(\widetilde{g}_j\mid\mathcal D)
-
\operatorname{Var}(\widetilde{g}_j\mid\mathcal D,y_Q),
\label{eq:resolved_variance}
\end{equation}
where $\mu'_j=\mathbb E[\widetilde{g}_j\mid\mathcal D,y_Q]$ is the posterior mean used to revise the comparison. 
Appendix~\ref{app:observation} derives their pre-execution distribution under Gaussian conditioning.

The following theorem gives the expected improvement in choosing between sets in closed form, using both resolved uncertainty $s_j(Q)$ and the current comparison margin $\mu_j$.

\begin{theorem}[Value of a probe for a decision]
\label{thm:evsi}
Under the joint Gaussian model, for $s_j(Q)>0$, the expected value of sample information (EVSI) for comparison $j$ is
\begin{equation}
E_j(Q)=\mathbb E_{y_Q\mid\mathcal D}\!\left[\pos{\mu'_j}\right]-\pos{\mu_j}
=s_j(Q)\psi\!\left(\frac{|\mu_j|}{s_j(Q)}\right),
\qquad \psi(t)=\phi(t)-t\Phi(-t),
\label{eq:evsi}
\end{equation}
where $\pos{x}=\max\{x,0\}$, $\phi$ and $\Phi$ are the standard normal density and cumulative distribution functions, and $E_j(Q)=0$ when $s_j(Q)=0$.
\end{theorem}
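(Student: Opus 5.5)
The approach is to identify the pre-execution distribution of the revised mean $\mu'_j$ and then evaluate a Gaussian expectation of a positive part in closed form. Under the joint Gaussian model, $(\widetilde g_j, y_Q)$ are jointly Gaussian given $\mathcal D$. This holds because $\widetilde g_j$ is linear in $\xi$ through $\varphi(c,S_j^a)-\varphi(c,S_j^b)$, and $y_Q$ is linear in $\xi$ plus independent Gaussian label noise, as in the measurement model of Appendix~\ref{app:observation}. The posterior mean is therefore $\mu'_j=\mu_j+\Cov(\widetilde g_j,y_Q\mid\mathcal D)\,\Var(y_Q\mid\mathcal D)^{-1}(y_Q-\mathbb E[y_Q\mid\mathcal D])$, an affine function of $y_Q$. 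Its predictive law is Gaussian with mean $\mu_j$ by the tower property. By the law of total variance, $\Var(\widetilde g_j\mid\mathcal D)=\mathbb E[\Var(\widetilde g_j\mid\mathcal D,y_Q)]+\Var(\mu'_j)$. Gaussian posterior variances do not depend on the realized $y_Q$, so $\Var(\mu'_j)=s_j^2(Q)$ exactly as defined in Eq.~\ref{eq:resolved_variance}. Hence $\mu'_j\overset{d}{=}\mu_j+s_j(Q)Z$ with $Z\sim\mathcal N(0,1)$. If $\Var(y_Q\mid\mathcal D)$ is singular, I would use a pseudo-inverse or restrict to a maximal linearly independent subvector; neither changes the conclusion.

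Next, I compute $\mathbb E[(\mu+sZ)_+]$ for $s>0$. Write $(\mu+sZ)_+=(\mu+sZ)\mathbb I[Z>-\mu/s]$. Using $\mathbb E[Z\mathbb I[Z>a]]=\phi(a)$, this gives $\mu\Phi(\mu/s)+s\phi(\mu/s)$. Subtracting $(\mu)_+$ splits into two cases. For $\mu\ge0$, we get $s\phi(\mu/s)-\mu(1-\Phi(\mu/s))=s\phi(t)-st\Phi(-t)$ with $t=|\mu|/s$. For $\mu<0$, we get $s\phi(|\mu|/s)-|\mu|\Phi(-|\mu|/s)$, using symmetry of $\phi$; this is the same expression. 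Both cases yield $s\,\psi(|\mu_j|/s_j(Q))$.

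For the degenerate case $s_j(Q)=0$, $\mu'_j=\mu_j$ almost surely, so $E_j(Q)=0$. It is also worth noting that $\psi(t)\to0$ as $t\to\infty$ and $s\psi(|\mu|/s)\to0$ as $s\downarrow0$, which keeps the formula continuous.

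The main obstacle is the first step. One must justify that $\Var(\mu'_j)$ equals the resolved variance $s_j^2(Q)$, including when label noise is correlated through the shared no-memory reference $U(\tau(c,\emptyset))$ entering several $y_S$. I would handle this by treating the shared reference as a common noise component in the covariance of $y_Q$. Joint Gaussianity is preserved, and the homoscedastic-posterior identity then applies unchanged. The remaining steps are routine Gaussian integrals.
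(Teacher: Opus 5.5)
Your proposal is correct and follows essentially the same route as the paper. Both arguments first establish $\mu'_j\sim\mathcal N(\mu_j,s_j^2(Q))$ from Gaussian conditioning; the paper reads the variance off as $k_j^\top W_Q^{-1}k_j$, whereas you obtain it from the law of total variance, which is only a cosmetic difference. Both then evaluate $\mathbb E[\pos{\mu_j+s_jZ}]=s_j\phi(\mu_j/s_j)+\mu_j\Phi(\mu_j/s_j)$, subtract $\pos{\mu_j}$, split on the sign of $\mu_j$ using Gaussian symmetry, and treat the degenerate case $s_j(Q)=0$ identically.
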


This score combines resolvable uncertainty with the current predicted utility difference and measures expected decision improvement in utility units. It can therefore compare prospective observations without simulating their outcomes. Appendix~\ref{app:information} relates binary EVSI to information gain and distinguishes local comparisons from selecting a single set among multiple alternatives.

We aggregate EVSI over the local comparison family $\mathcal G$ as $H(Q)=\sum_{j\in\mathcal G}E_j(Q)$ to select a probe batch $Q$ from the feasible probe pool. For each candidate \(q\), its marginal acquisition value is
\begin{equation}
V_t(q)
=
\sum_j E_j(Q_t\cup\{q\})
-
\sum_j E_j(Q_t),
\label{eq:batch_acquisition}
\end{equation}
where \(Q_t\subseteq Q\) denote the probes selected after \(t\) steps. The next probe is sampled according to a softmax over \(V_t(q)\), with the marginal values updated after each selection.
Appendix~\ref{app:batch} derives this prospective increment and its relationship to subsequent learning, while Appendix~\ref{app:probe_algorithm} gives the candidate construction and batch procedure.

\subsection{Learning from Execution Feedback}
\label{sec:learning}
The retrieved set \(S\), the selected probe sets \(Q\), and validated memory candidates are executed against the same no-memory reference to provide uplift observations for training \(G_\theta\).

Regression supervision fits the uplift magnitude:
\begin{equation}
\mathcal L_{\mathrm{reg}}
=
\mathbb E\!\left[
w_{\mathrm{mag}}(y_S)
\mathtt{Huber}_{\delta_H}
\left(G_\theta(c,S)-y_S\right)
\right],
\end{equation}
where \(w_{\mathrm{mag}}\) weights observations by uplift magnitude.

Since the no-memory baseline also determines whether a memory set is beneficial or harmful, we explicitly supervise the prediction sign while excluding near-zero observations:
\begin{equation}
\mathcal L_{\mathrm{sign}}
=
\mathbb E_{|y_S|>\epsilon_{\mathrm{sign}}}
\!\left[
\mathtt{BCEWithLogits}
\left(G_\theta(c,S),\mathbb I[y_S>0]\right)
\right],
\end{equation}
where \(\mathtt{BCEWithLogits}(\cdot)\) denotes binary cross-entropy applied directly to the predicted uplift as a logit.

Retrieval further requires distinguishing competing sets for the same task, so we impose pairwise supervision on their relative uplift. For pairs satisfying
\(|y_{S_a}-y_{S_b}|>\epsilon_{\mathrm{rank}}\),
\begin{equation}
\mathcal L_{\mathrm{pair}}
=
\mathbb E_{\mathrm{pair}}
\!\left[
\mathtt{softplus}
\left(
-\mathtt{sign}(y_{S_a}-y_{S_b})
\left[
G_\theta(c,S_a)-G_\theta(c,S_b)
\right]
\right)
\right].
\end{equation}
Here,  \(\mathtt{sign}(y_{S_a}-y_{S_b})\) indicates which set has the larger observed uplift.

The overall value objective combines these complementary signals
\begin{equation}
\mathcal L_{\mathrm{value}}
=
\lambda_{\mathrm{reg}}\mathcal L_{\mathrm{reg}}
+
\lambda_{\mathrm{sign}}\mathcal L_{\mathrm{sign}}
+
\lambda_{\mathrm{pair}}\mathcal L_{\mathrm{pair}}.
\label{eq:main_value_losses}
\end{equation}
Appendix~\ref{app:losses} specifies the weighting and sample-eligibility rules.

The same parameters $\theta$ also determine the retrieval policy $\pi_\theta$ through the conditional gains in Eqs.~\ref{eq:conditional_gain} and~\ref{eq:retrieval_policy}. For path $i$ on task $c_i$, let $L_i\geq1$ be its number of sampled actions, $a_{i,t}$ its $t$-th action, and $S_{i,t-1}$ the preceding set. Using paths whose terminal sets are executed, we optimize
\begin{equation}
\mathcal L_{\mathrm{policy}}
=-\mathbb E_i\!\left[
\frac{\mathtt{sg}(\omega_i)}{L_i}
\sum_{t=1}^{L_i}
\log\pi_\theta(a_{i,t}\mid c_i,S_{i,t-1})
\right].
\label{eq:main_policy_loss}
\end{equation}
The expectation averages eligible paths, $\omega_i$ is a signed and clipped weight derived from their observed uplift, and $\mathtt{sg}(\cdot)$ denotes the stop-gradient operator. 
Positive weights reinforce sampled actions, negative weights suppress them, and zero weights contribute no policy gradient. Value supervision also uses observations without a recorded retrieval path. Appendix~\ref{app:policy_training} specifies path eligibility and outcome weights.

After value-only warm-up, both objectives jointly update $\theta$ by minimizing $\mathcal L_{\mathrm{value}}+\mathcal L_{\mathrm{policy}}$, while the executor remains frozen. Memory writing extracts and validates new candidates from completed trajectories \citep{zhao2024expel,cao2026reme}, contributing a value label even when the selected candidate is not admitted. Appendix~\ref{app:writing} gives the revision and admission rules.

\section{Experiments}
\label{sec:experiments}

We evaluate task performance across complete agent workflows and then hold memory stores fixed to examine retrieval decisions. Component ablations test the learning and acquisition mechanisms. We then compare probe-acquisition strategies under increasing rollout budgets to evaluate feedback efficiency, followed by retrieval-budget experiments that examine sensitivity to the amount of available context.

\subsection{Experimental Setup}
\label{sec:setup}
We evaluate ALFWorld \citep{shridhar2021alfworld} using success rate (SR) and execution steps, WebShop \citep{yao2022webshop} using reward, SR, and interaction steps, and BigCodeBench (BCB) \citep{zhuo2025bigcodebench} using overall and hard-subset SR. MemSyco-Bench \citep{xiang2026memsyco} evaluates memory-use behavior under a dialogue-grouped 70:30 train/test split with seed 42. 

The frozen executors are Qwen3-4B and Qwen3-8B. For each executor, the retriever is parameterized by $G_\theta$, which is initialized from Qwen3-Reranker-0.6B \citep{zhang2025qwen3embedding} and adapted separately. Baselines include ReAct \citep{yao2023react}, conventional retrieval methods, MemP \citep{fang2026memp}, MemRL \citep{zhang2026memrl}, Mem0 \citep{chhikara2025mem0}, MemGPT \citep{packer2023memgpt}, A-MEM \citep{xu2025amem}, and LightMem \citep{fang2025lightmem}. End-to-end comparisons evaluate complete workflows, whereas MemSyco-Bench compares retrieval within identical memory stores. Appendix~\ref{app:protocol} specifies these controls and the implementation-evidence view used for BCB.

We recall 50 candidate memories and cap retrieval at $K=5$ for ALFWorld and WebShop and $K=3$ for BCB. MemSyco-Bench uses $K=5$, with additional tests at $K\in\{3,10\}$. Appendix~\ref{app:config} reports the model configuration. Each configuration is trained once and evaluated repeatedly with the same checkpoint, following the statistical protocol in Appendix~\ref{app:metrics}.

\subsection{RQ1: End-to-End Agent Performance}
\label{sec:rq1}

Table~\ref{tab:main_results} complete agent workflows across three benchmarks. UpliftMem attains the highest success rates on both ALFWorld splits and WebShop with both executors. With Qwen3-4B, it achieves 76.9\% and 70.9\% success on ALFWorld seen and unseen tasks, respectively, 45.60\% on WebShop, and 39.77\% on BCB. With Qwen3-8B, it achieves 81.43\% and 79.10\% on ALFWorld, 44.00\% on WebShop, and 42.69\% on BCB, matching the highest overall BCB result. It also obtains the highest WebShop reward with both executors and the fewest steps on ALFWorld seen tasks. We next hold memory stores fixed to examine the contribution of retrieval decisions.

\begin{table*}[t]
\centering
\footnotesize
\caption{End-to-end performance on ALFWorld, WebShop, and BCB with Qwen3-4B and Qwen3-8B. SR is reported in percent. Bold and underlined values indicate the \textbf{best} and \underline{runner-up} distinct results in each column, respectively.}
\label{tab:main_results}

\setlength{\tabcolsep}{2.6pt}
\renewcommand{\arraystretch}{1.10}

\begin{adjustbox}{max width=\textwidth}
\begin{tabular}{
@{}l
cccccccc
@{\hspace{5pt}}
cccccc
@{\hspace{5pt}}
cccc@{}
}
\toprule

& \multicolumn{8}{c}{\textbf{ALFWorld}}
& \multicolumn{6}{c}{\textbf{WebShop}}
& \multicolumn{4}{c}{\textbf{BCB}} \\

\cmidrule(lr){2-9}
\cmidrule(lr){10-15}
\cmidrule(lr){16-19}

& \multicolumn{2}{c}{\makecell{Seen SR$\uparrow$}}
& \multicolumn{2}{c}{\makecell{Seen Steps$\downarrow$}}
& \multicolumn{2}{c}{\makecell{Unseen SR$\uparrow$}}
& \multicolumn{2}{c}{\makecell{Unseen Steps$\downarrow$}}

& \multicolumn{2}{c}{\makecell{Reward$\uparrow$}}
& \multicolumn{2}{c}{\makecell{SR$\uparrow$}}
& \multicolumn{2}{c}{\makecell{Steps$\downarrow$}}

& \multicolumn{2}{c}{\makecell{SR$\uparrow$}}
& \multicolumn{2}{c}{\makecell{Hard SR$\uparrow$}} \\

\cmidrule(lr){2-3}
\cmidrule(lr){4-5}
\cmidrule(lr){6-7}
\cmidrule(lr){8-9}
\cmidrule(lr){10-11}
\cmidrule(lr){12-13}
\cmidrule(lr){14-15}
\cmidrule(lr){16-17}
\cmidrule(lr){18-19}

\rowcolor{headergray}
\textbf{Method}
& \textbf{4B} & \textbf{8B}
& \textbf{4B} & \textbf{8B}
& \textbf{4B} & \textbf{8B}
& \textbf{4B} & \textbf{8B}
& \textbf{4B} & \textbf{8B}
& \textbf{4B} & \textbf{8B}
& \textbf{4B} & \textbf{8B}
& \textbf{4B} & \textbf{8B}
& \textbf{4B} & \textbf{8B} \\

\midrule

ReAct
& 43.13 & 57.14
& 22.35 & 20.36
& 50.00 & 76.87
& 20.98 & 16.59
& 63.20 & \second{68.39}
& 38.00 & 41.60
& 7.09 & 5.80
& 31.87 & 40.35
& 15.91 & \second{20.45} \\

Random
& 47.14 & 65.00
& 18.54 & 19.46
& 49.25 & 71.64
& 21.73 & 17.80
& 62.22 & 52.87
& 39.20 & 29.80
& 7.26 & 6.13
& 30.70 & \best{42.69}
& 13.64 & 13.64 \\

BM25-RAG
& 59.28 & \second{75.71}
& 18.67 & \second{15.31}
& 52.23 & 70.15
& 19.75 & 17.13
& 57.88 & 64.87
& 37.60 & \second{41.80}
& 7.24 & 6.55
& 33.33 & 40.35
& 11.36 & 15.91 \\

Dense-RAG
& 50.00 & 67.14
& 20.82 & 17.02
& 52.99 & 68.66
& 21.05 & 17.43
& 56.97 & 64.17
& 34.80 & 38.60
& 7.40 & 6.61
& 34.50 & 39.47
& 15.91 & 11.36 \\

MemP
& 50.00 & 54.29
& 19.81 & 18.64
& 58.21 & 49.25
& 19.15 & 20.09
& 52.42 & 58.87
& 35.40 & 37.00
& 6.71 & 7.74
& 34.21 & 39.47
& 18.18 & 11.36 \\

MemRL
& 54.29 & 72.14
& 19.69 & 16.35
& 58.96 & \second{78.35}
& 19.36 & \second{15.44}
& 53.17 & 60.78
& 35.00 & 36.40
& 6.86 & 7.38
& 32.46 & 39.77
& 18.18 & 13.64 \\

Mem0
& \second{75.71} & 70.00
& \second{15.77} & 16.57
& 67.16 & 77.61
& 16.82 & \best{15.04}
& \second{70.26} & 58.36
& \second{43.60} & 33.80
& 5.59 & \best{5.03}
& 35.09 & 40.64
& \second{20.45} & 11.36 \\

MemGPT
& 57.14 & 52.86
& 19.37 & 21.30
& 65.67 & 64.18
& 19.03 & 20.27
& 67.26 & 65.65
& 40.20 & 40.40
& 6.33 & \second{5.49}
& \second{38.01} & 41.52
& 18.18 & 11.36 \\

A-MEM
& 57.86 & 58.57
& 20.29 & 20.99
& 58.96 & 61.19
& 21.15 & 22.33
& 65.14 & 55.48
& 39.60 & 33.20
& \best{5.54} & 5.92
& 33.04 & 38.30
& 18.18 & 18.18 \\

LightMem
& 64.29 & 66.43
& 19.81 & 18.99
& \second{67.91} & 77.61
& \second{16.74} & 16.74
& 66.28 & 61.41
& 40.40 & 37.80
& 5.78 & 6.86
& 35.96 & \second{41.81}
& \best{22.73} & 18.18 \\

\midrule

\rowcolor{oursblue}
\textbf{UpliftMem}
& \best{76.86} & \best{81.43}
& \best{14.01} & \best{13.61}
& \best{70.91} & \best{79.10}
& \best{15.81} & 16.53
& \best{72.51} & \best{70.20}
& \best{45.60} & \best{44.00}
& \second{5.57} & 5.56
& \best{39.77} & \best{42.69}
& \second{20.45} & \best{22.73} \\

\bottomrule
\end{tabular}
\end{adjustbox}
\parbox{\textwidth}{\footnotesize\raggedright
}
\end{table*}

\subsection{RQ2: Controlled Retrieval Evaluation}
\label{sec:rq2}

MemSyco-Bench evaluates when memory should guide a response through Objective Fact, Scope Control, and Evidence Conflict, and how it should be used through Personalized Use and Valid Selection. Each scenario reports response accuracy alongside a behavioral measure, namely sycophancy (Syco) for the first three, correct-memory use (Correct) for Personalized Use, and outdated-memory use (Outdated) for Valid Selection.

To isolate retrieval effects, we replace each memory system's native retriever with UpliftMem while holding its post-extraction memory store and executor fixed. Figure~\ref{fig:upliftmem_heatmap} shows higher accuracy and lower sycophancy across Objective Fact, Scope Control, and Evidence Conflict for all five memory stores and both executors. Scope Control accuracy increases by 23.7--62.7 percentage points with Qwen3-4B, while Evidence Conflict accuracy increases by 29.1--54.6 points with Qwen3-8B. In Valid Selection, accuracy improves in all ten store--executor comparisons, with outdated-memory use decreasing in nine. These results show that retrieval decisions affect both when agents rely on memory and which memories they use, even with identical available memory content. Table~\ref{tab:app-main-k5} reports the corresponding absolute scores and inference-run variability for each memory store.
\begin{figure}[t]
    \centering
    \includegraphics[
        width=0.8\linewidth
    ]{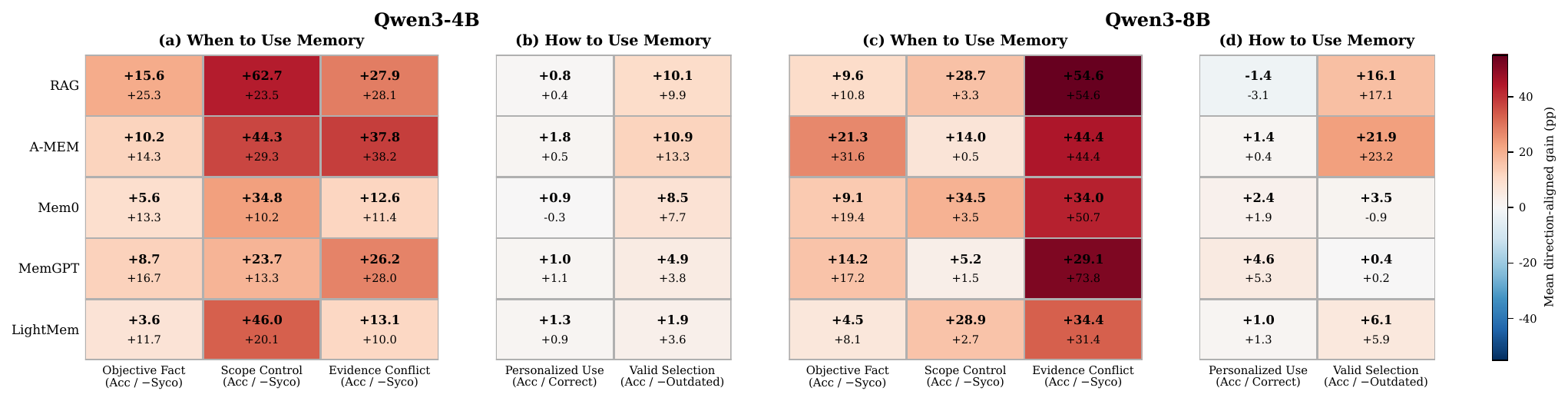}
\caption{Retrieval gains on MemSyco-Bench with fixed memory stores and $K=5$. Each cell shows the change from native retrieval in accuracy (top) and the indicated behavioral metric (bottom), in percentage points. Changes in sycophancy and outdated-memory use are sign-reversed so that positive values indicate improvement.}
    \label{fig:upliftmem_heatmap}
\end{figure}

\subsection{RQ3: Ablation Study}
\label{sec:rq3}

We examine the contributions of value adaptation, probe selection, and the learning objectives through five ablation variants. \emph{Base reranker} uses the pretrained scorer without value adaptation. \emph{Warm-up only} retains bootstrap probing and initial value-only LoRA adaptation but disables subsequent probing and scorer updates. Both variants retain online memory writing. \emph{Random probing} replaces EVSI-guided selection with uniform sampling under the same probe budget. The remaining variants remove either the policy or value loss while retaining the other training components.

Table~\ref{tab:ablation} shows that the full model achieves the highest overall success rates across all three benchmarks with both executors. On ALFWorld seen tasks, its success rates reach 76.86\% and 81.43\% with Qwen3-4B and Qwen3-8B, compared with 64.76\% and 66.19\% under warm-up only, and 66.07\% and 72.85\% under random probing. 

The MemSyco-Bench results further connect these gains to memory-use decisions (Table~\ref{tab:ablation_memsyco}). Value initialization increases Decision accuracy from 43.1\%/44.5\% to 49.6\%/48.8\%, while the full model reaches 66.5\%/68.8\%. Relative to random probing, EVSI-guided acquisition improves Decision accuracy by 10.6/13.9 percentage points and reduces sycophancy by 9.2/14.8 points. Removing either learning objective also reduces Decision accuracy under both executors, supporting the complementary roles of value estimation and outcome-weighted retrieval-path learning. 

\begin{table*}[t]
\centering
\footnotesize
\caption{Component ablations on ALFWorld, WebShop, and BCB with both executors.}
\label{tab:ablation}

\setlength{\tabcolsep}{2.6pt}
\renewcommand{\arraystretch}{1.10}

\begin{adjustbox}{max width=\textwidth}
\begin{tabular}{
@{}l
cccccccc
@{\hspace{5pt}}
cccccc
@{\hspace{5pt}}
cccc@{}
}
\toprule

& \multicolumn{8}{c}{\textbf{ALFWorld}}
& \multicolumn{6}{c}{\textbf{WebShop}}
& \multicolumn{4}{c}{\textbf{BCB}} \\

\cmidrule(lr){2-9}
\cmidrule(lr){10-15}
\cmidrule(lr){16-19}

& \multicolumn{2}{c}{\makecell{Seen SR$\uparrow$}}
& \multicolumn{2}{c}{\makecell{Seen Steps$\downarrow$}}
& \multicolumn{2}{c}{\makecell{Unseen SR$\uparrow$}}
& \multicolumn{2}{c}{\makecell{Unseen Steps$\downarrow$}}

& \multicolumn{2}{c}{\makecell{Reward$\uparrow$}}
& \multicolumn{2}{c}{\makecell{SR$\uparrow$}}
& \multicolumn{2}{c}{\makecell{Steps$\downarrow$}}

& \multicolumn{2}{c}{\makecell{SR$\uparrow$}}
& \multicolumn{2}{c}{\makecell{Hard SR$\uparrow$}} \\

\cmidrule(lr){2-3}
\cmidrule(lr){4-5}
\cmidrule(lr){6-7}
\cmidrule(lr){8-9}
\cmidrule(lr){10-11}
\cmidrule(lr){12-13}
\cmidrule(lr){14-15}
\cmidrule(lr){16-17}
\cmidrule(lr){18-19}

\rowcolor{headergray}
\textbf{Variant}
& \textbf{4B} & \textbf{8B}
& \textbf{4B} & \textbf{8B}
& \textbf{4B} & \textbf{8B}
& \textbf{4B} & \textbf{8B}
& \textbf{4B} & \textbf{8B}
& \textbf{4B} & \textbf{8B}
& \textbf{4B} & \textbf{8B}
& \textbf{4B} & \textbf{8B}
& \textbf{4B} & \textbf{8B} \\

\midrule

Base reranker
& 68.57 & 65.00
& 15.52 & 17.31
& 63.43 & 64.93
& 17.12 & 17.33
& 67.12 & 66.81
& 41.80 & 42.00
& 6.32 & 6.44
& 33.63 & 38.89
& 18.18 & 18.18 \\

Warm-up only
& 64.76 & 66.19
& 16.69 & 16.86
& 61.19 & 67.16
& 17.36 & \second{16.65}
& \second{68.82} & 66.66
& \second{43.60} & 43.20
& \second{5.71} & 6.46
& 34.80 & \second{41.23}
& 18.18 & 18.18 \\

Random probing
& 66.07 & 72.85
& 16.18 & 15.32
& 63.43 & \second{76.12}
& 17.28 & 16.85
& 68.03 & \second{70.16}
& 41.60 & \second{43.60}
& 6.08 & \second{5.59}
& 37.72 & 40.94
& \second{20.45} & \second{20.45} \\

w/o policy loss
& \second{71.42} & 70.71
& 16.98 & 14.85
& 67.16 & 70.90
& 16.85 & 17.43
& 67.88 & 69.54
& 41.80 & 43.40
& 5.99 & 6.78
& \second{38.89} & \second{41.23}
& \best{25.00} & 18.18 \\

w/o value loss
& 70.00 & \second{75.00}
& \second{15.21} & \second{14.34}
& \second{68.65} & 73.13
& \second{16.74} & 17.15
& 68.61 & 68.25
& 43.40 & 43.20
& 5.72 & 6.01
& 37.72 & \second{41.23}
& \second{20.45} & 18.18 \\

\midrule

\rowcolor{oursblue}
\textbf{UpliftMem (Full)}
& \best{76.86} & \best{81.43}
& \best{14.01} & \best{13.61}
& \best{70.91} & \best{79.10}
& \best{15.81} & \best{16.53}
& \best{72.51} & \best{70.20}
& \best{45.60} & \best{44.00}
& \best{5.57} & \best{5.56}
& \best{39.77} & \best{42.69}
& \second{20.45} & \best{22.73} \\

\bottomrule
\end{tabular}
\vspace{-2pt}
\end{adjustbox}
\end{table*}

\begin{table*}[htbp]
\centering
\footnotesize
\caption{MemSyco-Bench ablations at $K=5$, averaged over five memory stores. Each entry reports Qwen3-4B/Qwen3-8B.}
\label{tab:ablation_memsyco}
\setlength{\tabcolsep}{3pt}
\renewcommand{\arraystretch}{1.12}
\begin{tabular*}{0.85\textwidth}{@{\extracolsep{\fill}}lccccccc@{}}
\toprule
\rowcolor{headergray}
\textbf{Metric} & \textbf{Native} & \textbf{Base} & \textbf{Warm-up} & \textbf{Random} & \textbf{No policy} & \textbf{No value} & \textbf{UpliftMem} \\
\midrule

Decision$\uparrow$
& 41.7/44.4
& 43.1/44.5
& 49.6/48.8
& 55.9/54.9
& 56.7/56.1
& 55.8/53.1
& \cellcolor{oursblue}\textbf{66.5/68.8}
\\

Syco.$\downarrow$
& 49.0/40.0
& 47.2/36.7
& 44.2/34.7
& 38.7/31.3
& 39.2/31.9
& 41.6/35.3
& \cellcolor{oursblue}\textbf{29.5/16.5}
\\

Select$\uparrow$
& 54.3/61.1
& 55.1/62.2
& 53.7/57.9
& 58.3/60.3
& 56.9/60.2
& 58.0/62.8
& \cellcolor{oursblue}\textbf{61.6/70.7}
\\

Old$\downarrow$
& 47.0/40.8
& 46.5/39.6
& 47.6/42.4
& 42.5/41.3
& 43.8/40.3
& 43.6/38.9
& \cellcolor{oursblue}\textbf{39.3/31.7}
\\

\bottomrule
\end{tabular*}
\vspace{-4pt}
\end{table*}

\subsection{RQ4: Probe Strategy Efficiency}
\label{sec:rq5}

\begingroup
\setlength{\intextsep}{6pt}
\setlength{\columnsep}{10pt}



Starting from the same warm-up checkpoint and a fixed memory store, we compare EVSI with random probing, Monte Carlo predictive standard deviation (MC-STD), and expected information gain (EIG) on WebShop. Each strategy collects additional probes and updates the retriever through LoRA at budgets of $\{20,50,80,100,120,150,200\}$, followed by evaluation on the same 200 validation tasks. We measure cumulative success rate as the fraction of tasks solved by at least one checkpoint up to the current budget, alongside the mean reward of the current checkpoint.
\begin{wrapfigure}{r}{0.6\textwidth}
    \centering
    \includegraphics[width=\linewidth]
        {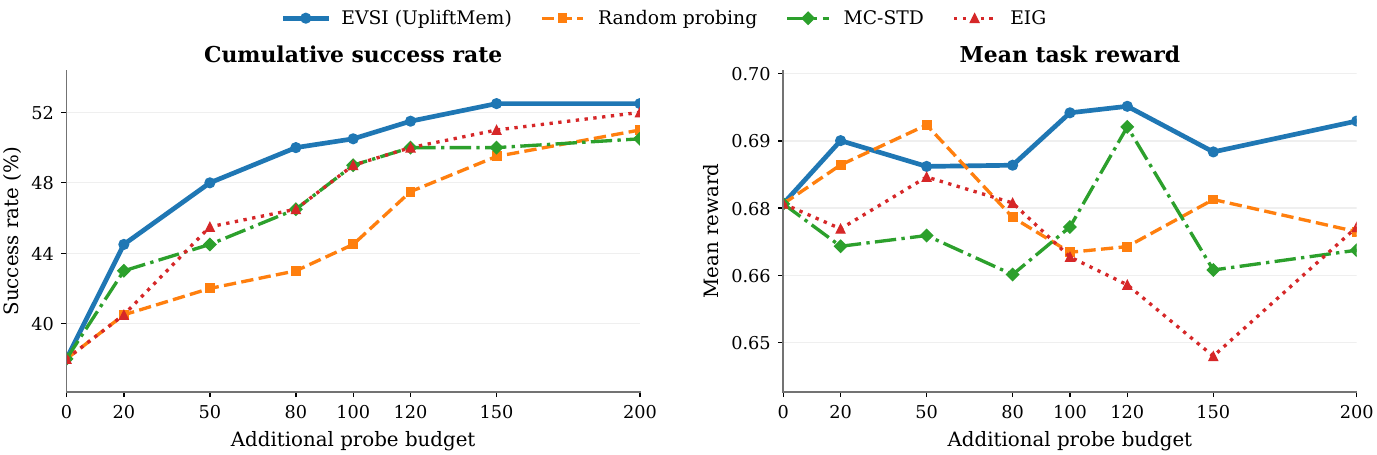}
    \captionsetup{
        font=footnotesize,
        labelfont=normalfont,
        labelsep=period,
        skip=3pt
    }
    \caption{Cumulative success rate and mean reward across probe budgets on WebShop.}
    \label{fig:probe_efficiency}
\end{wrapfigure}
Figure~\ref{fig:probe_efficiency} shows that EVSI achieves the highest cumulative success rate at every nonzero budget, reaching 48.0\% with 50 probes compared with 42.0\% for random probing. Although the coverage gap narrows as more feedback is acquired, EVSI retains the highest coverage at 200 probes and achieves a mean reward of 0.694, compared with 0.670 for random probing. These results demonstrate that decision-aware acquisition converts limited execution feedback into broader task coverage and improved downstream performance.
Additional probe-budget results for ALFWorld and BigCodeBench are presented in Appendix~\ref{app:probe_efficiency}.

\par
\endgroup

\subsection{RQ5: Sensitivity to Retrieval Budget}
\label{sec:rq4}
The preceding experiment varies the training-time probe budget. We now turn to the inference-time retrieval budget $K$, which controls how many memories may enter the task context. We vary $K\in\{3,5,10\}$ on MemSyco-Bench to examine how retrieval performance changes with available context. Figure~\ref{fig:budget} reports the absolute metrics, and Table~\ref{tab:app-k-sens} gives scenario-level gains over native retrieval.

The cross-model average gain remains positive at all three budgets, reaching 14.2, 15.8, and 15.1 percentage points at $K=3,5,10$, respectively. The scenario breakdown also varies with $K$, with Evidence Conflict gains increasing as Scope Control gains decrease. UpliftMem therefore maintains its retrieval advantage across context capacities while adapting which memory-use errors are most strongly corrected.

\begin{figure*}[htbp]
\centering
\includegraphics[width=0.8\textwidth]{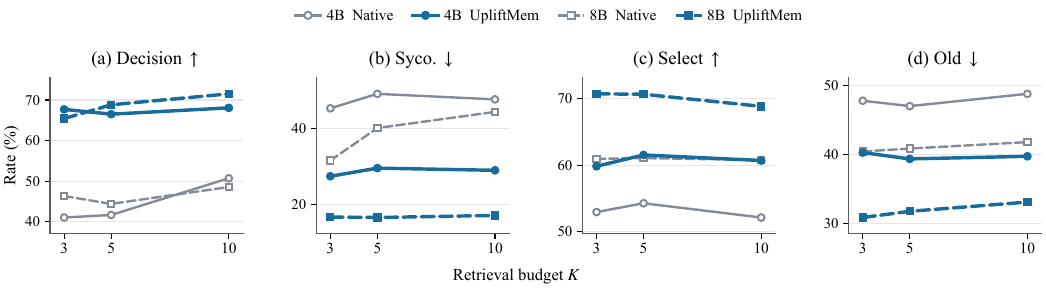}
\caption{Effect of the maximum retrieval budget $K$ on MemSyco-Bench, averaged across five memory stores. Higher Decision and Select and lower Syco.\ and Old indicate better performance.}
\label{fig:budget}
\end{figure*}

\vspace{-12pt}
\section{Conclusion}
\label{sec:conclusion}


We studied memory retrieval for interactive agents as a learning problem in which retrieval determines both what experience is used and which execution outcomes become available as supervision. \textsc{UpliftMem} addresses this coupling by learning set-level execution uplift relative to no-memory behavior and allocating additional rollouts according to their expected value for improving retrieval decisions. Across ALFWorld, WebShop, and BigCodeBench, \textsc{UpliftMem} achieves the best or joint-best success rates on the main evaluation sets. With fixed memory stores, it increases memory-use decision accuracy from 41.7\% to 66.5\% with Qwen3-4B and from 44.4\% to 68.8\% with Qwen3-8B; under matched probe budgets, EVSI-guided acquisition improves this accuracy over random probing by 10.6 and 13.9 percentage points. These results establish a broader view of agent memory retrieval: learning what experience to use also requires deciding which missing outcomes are worth observing.

\subsection*{AI Use Statement}
Generative AI was used to assist with manuscript organization, language editing and literature searches during revision.

\bibliography{iclr2027_conference}
\bibliographystyle{iclr2027_conference}
\clearpage
\appendix
\raggedbottom

\section{Theoretical Analysis}
\label{app:theory}

We extend the coverage analysis in Section~\ref{sec:retrieval} and derive the acquisition scores in Section~\ref{sec:probing}. The resulting Gaussian expressions are used by the batch procedure in Appendix~\ref{app:probe_algorithm}.

\subsection{Selective Retrieval and Observation Coverage}
\label{app:selective}

The analysis fixes a task $c$, the recalled candidate pool, the feasibility constraints, and the training policy in Eq.~\ref{eq:retrieval_policy}. A memory set is identified by its members, independently of the order in which they were added. Let $\mathcal P(B)$ contain every feasible sampled construction path that terminates at $B$. A sampled $\STOP$ belongs to the path, whereas reaching a retrieval limit ends it without another sampled action. Distinct complete paths are mutually exclusive, even when they lead to the same set.

For a path $\mathbf a=(a_1,\ldots,a_L)$, write $S_{t-1}$ for the set preceding action $a_t$. Its nonnegative step deficit $\delta_t$ is the highest feasible conditional gain at $S_{t-1}$ minus the gain of $a_t$. The length $L=L(\mathbf a)$ counts sampled actions, including a sampled $\STOP$, and the cumulative deficit is $d(\mathbf a)=\sum_{t=1}^L\delta_t$ as in Eq.~\ref{eq:path_deficit}.

\begin{proof}[Proof of Proposition~\ref{prop:passive_observation}]
At any sampled decision, the denominator of Eq.~\ref{eq:retrieval_policy} contains a maximizing action, giving
\begin{equation}
\pi_\theta(a_t\mid c,S_{t-1})\leq\exp\!\left(-\frac{\delta_t}{\tau_{\mathrm{tr}}}\right).
\end{equation}
Writing $p(\mathbf a\mid c)$ for a complete path's probability, multiplication of its sampled conditional action probabilities gives
\begin{equation}
 p(\mathbf a\mid c)=\prod_{t=1}^L\pi_\theta(a_t\mid c,S_{t-1})\leq \exp\!\left(-\frac{d(\mathbf a)}{\tau_{\mathrm{tr}}}\right).
\label{eq:path_probability_bound}
\end{equation}
Summing the probabilities of all disjoint complete paths leading to $B$ yields
\begin{equation}
 p_B=\sum_{\mathbf a\in\mathcal P(B)}p(\mathbf a\mid c)\leq\sum_{\mathbf a\in\mathcal P(B)}\exp\!\left(-\frac{d(\mathbf a)}{\tau_{\mathrm{tr}}}\right),
\end{equation}
which is Eq.~\ref{eq:passive_probability_bound}. If $B$ is unreachable under the candidate pool and feasibility constraints, $\mathcal P(B)=\emptyset$ and $p_B=0$.
\end{proof}

The exact expression retains competition among actions. At a sampled state along path $\mathbf a$, define
\begin{equation}
 Z_t(\mathbf a)=\sum_{a\in\mathcal A(c,S_{t-1})}\exp\!\left(\frac{g_\theta(a\mid c,S_{t-1})-\max_{a'\in\mathcal A(c,S_{t-1})}g_\theta(a'\mid c,S_{t-1})}{\tau_{\mathrm{tr}}}\right).
\label{eq:path_normalizer}
\end{equation}
At each sampled decision, $1\leq Z_t\leq|\mathcal A(c,S_{t-1})|$, and
\begin{equation}
 p_B=\sum_{\mathbf a\in\mathcal P(B)}\exp\!\left(-\frac{d(\mathbf a)}{\tau_{\mathrm{tr}}}\right)\prod_{t=1}^{L(\mathbf a)}Z_t(\mathbf a)^{-1}.
\label{eq:exact_set_probability}
\end{equation}
The probability of reaching a set reflects action deficits and competition along every construction order. The sum over all feasible paths therefore determines direct coverage, rather than the probability of any one ordering.

\begin{corollary}[Arrival of direct set evidence]
\label{cor:direct_evidence}
Let $\bar p_B=\min\{1,\sum_{\mathbf a\in\mathcal P(B)}\exp[-d(\mathbf a)/\tau_{\mathrm{tr}}]\}$. For $n$ independent runs under the fixed policy, let $N_B(n)$ count terminal executions of $B$ and $T_B$ denote the run index of its first execution. Then
\begin{equation}
\mathbb E[N_B(n)]=np_B\leq n\bar p_B,\qquad \Pr(N_B(n)=0)=(1-p_B)^n\geq\max\{0,1-n\bar p_B\}.
\label{eq:direct_evidence_count}
\end{equation}
If $p_B>0$, then $\mathbb E[T_B]=p_B^{-1}\geq\bar p_B^{-1}$. If $p_B=0$, $T_B$ is infinite almost surely.
\end{corollary}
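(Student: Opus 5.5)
The corollary follows by treating the $n$ runs as Bernoulli trials with success probability $p_B$ and then substituting the bound from Proposition~\ref{prop:passive_observation}. Under the fixed policy and fixed parameters, each run independently returns $B$ with the same probability $p_B$, since the path distribution of Eq.~\ref{eq:exact_set_probability} does not change across runs. Define the indicator $X_i=\mathbb I[\text{run } i \text{ terminates at } B]$. Then $N_B(n)=\sum_{i=1}^n X_i$ is Binomial$(n,p_B)$, and $T_B=\min\{i: X_i=1\}$ is Geometric$(p_B)$ on $\{1,2,\ldots\}$.

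\emph{Step 1: the bound $p_B\leq\bar p_B$.} Proposition~\ref{prop:passive_observation} gives $p_B\leq\sum_{\mathbf a\in\mathcal P(B)}\exp[-d(\mathbf a)/\tau_{\mathrm{tr}}]$. Since $p_B$ is a probability, $p_B\leq 1$ as well. Together these give $p_B\leq\bar p_B$.

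\emph{Step 2: the count and zero-probability statements.} Linearity of expectation gives $\mathbb E[N_B(n)]=np_B$, and Step 1 bounds this by $n\bar p_B$. Independence gives $\Pr(N_B(n)=0)=\prod_i\Pr(X_i=0)=(1-p_B)^n$. For the lower bound I would use Bernoulli's inequality $(1-p)^n\geq 1-np$, valid for $p\in[0,1]$ and $n\geq1$. The map $p\mapsto 1-np$ is decreasing, so $1-np_B\geq 1-n\bar p_B$. Finally $(1-p_B)^n\geq0$, which yields the $\max\{0,\cdot\}$ form. Alternatively, the union bound $\Pr(N_B(n)\geq1)\leq\sum_i\Pr(X_i=1)=np_B$ gives the same inequality directly.

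\emph{Step 3: the waiting time.} If $p_B>0$, then $\Pr(T_B=k)=(1-p_B)^{k-1}p_B$, and the standard geometric series gives $\mathbb E[T_B]=1/p_B$. Here $0<p_B\leq\bar p_B$, so $\bar p_B>0$ and $1/p_B\geq 1/\bar p_B$. If $p_B=0$, then $\Pr(T_B\leq n)=1-(1-p_B)^n=0$ for every $n$. Continuity from below over the increasing events $\{T_B\leq n\}$ then gives $\Pr(T_B<\infty)=0$, so $T_B=\infty$ almost surely.

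There is no real obstacle; the only care needed is modelling. The claim that runs are i.i.d. with the same $p_B$ rests on ``fixed policy'', meaning $\theta$, the pool and the constraints do not change between runs. The exclusion of the degenerate $\bar p_B=0$ case in Step 3 is handled automatically because $p_B>0$ forces $\bar p_B>0$.
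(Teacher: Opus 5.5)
Your proposal is correct and follows the paper's proof. The paper likewise treats the runs as Bernoulli trials, giving a binomial count and a geometric first arrival, and obtains the zero-count bound from $(1-p_B)^n\geq 1-np_B$ together with $p_B\leq\bar p_B$ from Proposition~\ref{prop:passive_observation}; your continuity-from-below argument for $p_B=0$ spells out the step the paper says follows directly.
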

\begin{proof}
Independent runs give $N_B(n)\sim\operatorname{Binomial}(n,p_B)$ and a geometric first-arrival time when $p_B>0$. The zero-count bound follows from $(1-p_B)^n\geq1-np_B$ and $p_B\leq\bar p_B$. The zero-coverage case follows directly.
\end{proof}

Complementarity illustrates the effect of construction-path probabilities on coverage. Suppose two candidate memories satisfy $G_\theta(c,\{m_1\})=G_\theta(c,\{m_2\})=-a$ with $a>0$ and $G_\theta(c,\{m_1,m_2\})=b>0$, while $\STOP$ has gain zero. Each of the two addition orders begins with a deficit of at least $a$, even though the completed set has positive predicted value. Equation~\ref{eq:passive_probability_bound} gives $p_{\{m_1,m_2\}}\leq\min\{1,2e^{-a/\tau_{\mathrm{tr}}}\}$. Both construction orders are included in this bound, connecting set-level complementarity to the feedback coverage discussed in Section~\ref{sec:retrieval}.

\subsection{Identification Under Incomplete Coverage}
\label{app:identification}

The coverage result becomes an identification constraint when observations of other sets provide no structural information about the unexecuted alternative. The next proposition isolates this setting to distinguish direct evidence from generalization through a shared predictor.

\begin{proposition}[Unobserved alternatives remain observationally indistinguishable]
\label{prop:coverage_identification}
Fix a task $c$ and a nonempty feasible set $B$. Consider a class of outcome laws in which the utility distribution of $B$ can vary independently of the distributions for the other sets. Under a fixed retrieval policy that executes $B$ with probability $p_B$, there exist two environments with opposite signs of $\Delta(c,B)$ such that any sign estimator $\widehat z_n$ based on $n$ independent ordinary retrieval executions obeys the following bound. Index the environments by $v\in\{-,+\}$, with $\Pr_v$ their probability laws and $\Delta_v$ their corresponding uplifts.
\begin{equation}
\max_{v\in\{-,+\}}\Pr_v\!\left(\widehat z_n\ne\operatorname{sign}(\Delta_v(c,B))\right)\geq\frac12(1-p_B)^n.
\label{eq:coverage_lower_bound}
\end{equation}
The bound also holds when no-memory outcomes accompany each execution. If $p_B=0$, the ordinary-data laws are identical for all $n$ and the sign is unidentifiable over this outcome class.
\end{proposition}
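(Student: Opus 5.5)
We will use a standard two-point (Le Cam) construction in which the two environments coincide everywhere except on the outcome law of $B$. Fix any outcome law for the no-memory execution and for every feasible set other than $B$; these laws are shared by both environments. Since the class allows the utility distribution of $B$ to vary independently, choose two laws for $U(\tau(c,B))$. Under environment $+$ the utility of $B$ is, say, a point mass (or any law) with mean strictly above $V(c,\emptyset)$. Under environment $-$ it has mean strictly below $V(c,\emptyset)$. Then $\Delta_+(c,B)>0>\Delta_-(c,B)$. This is possible as long as $V(c,\emptyset)$ lies in the interior of the utility range; otherwise we first choose the no-memory law accordingly. Because the retrieval policy depends only on $\theta$, the task and the pool, and not on realized outcomes, the distribution of the sequence of selected sets is identical under both environments. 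In particular, $B$ is selected with the same probability $p_B$ in each run, as in Proposition~\ref{prop:passive_observation}.

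Next we couple the two data-generating processes run by run. We draw the selected set from the policy and the no-memory outcome from the shared law. If the selected set is not $B$, we draw its outcome from the shared law. If it is $B$, we draw independently from the environment-specific law. Let $\mathcal E_n$ be the event that none of the $n$ independent runs selects $B$; it has probability $(1-p_B)^n$ under both laws. On $\mathcal E_n$ the full data record $D_n$, including any accompanying no-memory outcomes, has the same conditional law under $\Pr_+$ and $\Pr_-$. Hence $\Pr_+(D_n\in A,\mathcal E_n)=\Pr_-(D_n\in A,\mathcal E_n)$ for every event $A$.

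Now consider any estimator $\widehat z_n=f(D_n)$, possibly randomized with independent randomness. Summing the two error probabilities and restricting to $\mathcal E_n$ gives
\begin{equation}
\Pr_+(\widehat z_n\neq +1)+\Pr_-(\widehat z_n\neq -1)\geq \Pr_+(\widehat z_n\neq+1,\mathcal E_n)+\Pr_+(\widehat z_n\neq-1,\mathcal E_n)\geq \Pr(\mathcal E_n)=(1-p_B)^n .
\end{equation}
The middle step uses the equality of the restricted laws. The last step holds because $\{\widehat z_n\neq+1\}\cup\{\widehat z_n\neq-1\}$ is the whole space. Since the maximum of the two error probabilities is at least their average, Eq.~\ref{eq:coverage_lower_bound} follows. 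When $p_B=0$, $\mathcal E_n$ has probability one, so the two data laws coincide for every $n$. Two environments with opposite signs then yield identical observational distributions, which is exactly non-identifiability of the sign over this class.

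The only delicate step is the second paragraph. We must argue carefully that the policy's selection law, and the law of everything observed off $B$, does not depend on the environment. This requires the policy to be fixed rather than updated from outcomes, which is part of the hypothesis. It also requires the recorded data not to include any statistic of $B$'s outcome distribution when $B$ is not executed. Writing the joint law as a product over runs, each run being a mixture over selected sets, makes this explicit: the environment-specific factor appears only in the mixture component indexed by $B$. Once that factorization is written down, the remainder of the argument is routine.
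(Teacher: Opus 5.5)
Your proposal is correct and follows essentially the same route as the paper: a two-point construction in which the environments agree on every set except $B$ (the paper uses deterministic utility $1/2$ off $B$ and $3/4$ versus $1/4$ on $B$), equal data laws on the event that $B$ is never executed, the fact that the two error probabilities on that event sum to at least its probability $(1-p_B)^n$, and the bound of the maximum by the average. Your version allows general shared outcome laws and uses restricted measures where the paper conditions on that event, which changes nothing of substance.
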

\begin{proof}
Construct two environments with deterministic utility $1/2$ for the empty set and every set other than $B$. Give $B$ utility $3/4$ in environment $+$ and $1/4$ in environment $-$. The two environments have respective uplifts $1/4$ and $-1/4$ at $B$, while all other outcome laws agree. Let $E_n$ be the event that ordinary retrieval never executes $B$. Its probability is $(1-p_B)^n$ under either environment. Conditional on $E_n$, all observed feedback, including any no-memory references, has the same distribution in both environments. Because the true signs differ, the two conditional estimation-error probabilities sum to at least one. Their average unconditional error is therefore at least $\frac12\Pr(E_n)$, and their maximum is no smaller. When $p_B=0$, $E_n$ occurs surely.
\end{proof}

Combining this result with Proposition~\ref{prop:passive_observation} gives a worst-case sign-error bound of at least $\frac12(1-\bar p_B)^n$ over the stated outcome class. Shared representations introduce additional structure across sets. The surrogate in Appendix~\ref{app:observation} uses such structure to guide acquisition, while new executions supply direct labels.

\subsection{Changing Policies and Additional Probe Coverage}
\label{app:changing_policy}

Online scorer updates change retrieval probabilities. A history-dependent version of the coverage argument connects the fixed-policy results to this training setting.

Let $\mathcal H_{t-1}$ be the history before round $t$ and $p_t(B\mid\mathcal H_{t-1})$ the conditional probability of executing $B$. Suppose a deterministic sequence $\epsilon_t\in[0,1]$ satisfies $p_t(B\mid\mathcal H_{t-1})\leq\epsilon_t$ for every attainable history. Conditional expectation and the union bound give
\begin{equation}
\mathbb E[N_B(n)]\leq\sum_{t=1}^n\epsilon_t,\qquad \Pr(N_B(n)=0)\geq\max\left\{0,1-\sum_{t=1}^n\epsilon_t\right\}.
\label{eq:adaptive_coverage}
\end{equation}
For the two environments in Proposition~\ref{prop:coverage_identification}, couple the policy randomness and all observed outcomes until the first execution of $B$. Their histories agree before that event, giving a worst-case sign error of at least $\frac12\max\{0,1-\sum_{t=1}^n\epsilon_t\}$. A set excluded throughout training has zero ordinary coverage.

A separately selected probe batch adds an opportunity to observe $B$. If a fixed ordinary policy executes $B$ with probability $p_B$ on each of $n$ independent runs, and an independently selected probe batch includes $B$ with probability $r_B$, the probability of at least one direct observation is
\begin{equation}
1-(1-p_B)^n(1-r_B).
\label{eq:probe_coverage}
\end{equation}
Equation~\ref{eq:probe_coverage} isolates the coverage added by an independent probe batch. The acquisition procedure in Section~\ref{sec:probing} instead conditions on available training data, as formalized by the following surrogate.

\subsection{Gaussian Beliefs for Prospective Execution Evidence}
\label{app:observation}

This subsection derives the resolvable variance in Eq.~\ref{eq:resolved_variance}. All quantities are conditional on the data $\mathcal D$ available before selecting a probe batch. The local Gaussian surrogate represents uncertainty about expected memory uplift, while execution variability enters through a separate observation-noise term. For a prospective batch $Q$ of $n_Q=|Q|$ probes, let $y_Q\in\mathbb R^{n_Q}$ be the label vector, $\mu_Q\in\mathbb R^{n_Q}$ collect the predictions $G_\theta(c_q,B_q)$, and $\Phi_Q\in\mathbb R^{n_Q\times d}$ have rows $\varphi(c_q,B_q)^\top$. The residual execution noise $\varepsilon_Q$ has covariance $\Omega_Q\in\mathbb R^{n_Q\times n_Q}$. The joint measurement model is
\begin{equation}
y_Q=\mu_Q+\Phi_Q\xi+\varepsilon_Q,\qquad \xi\sim\mathcal N(0,\Sigma),\qquad \varepsilon_Q\sim\mathcal N(0,\Omega_Q).
\label{eq:measurement}
\end{equation}
The latent vector $\xi$ and execution noise $\varepsilon_Q$ are independent conditional on $\mathcal D$. The conventions $G_\theta(c,\emptyset)=0$ and $\varphi(c,\emptyset)=\mathbf0$ fix the empty-set uplift at zero. The feature basis and covariance refer to the same fixed scorer state.

For the comparisons in Section~\ref{sec:probing}, write $g_j=A_j[F]-b_j$, where $A_j$ is a linear functional of finitely many set values and $b_j$ is a fixed reference in utility units. A two-set comparison has $A_j[F]=F(c,B_j^1)-F(c,B_j^0)$ and $b_j=0$. A threshold comparison is included when its threshold is the utility of a genuine reference action. Let $\mu(c,B)=G_\theta(c,B)$ be the mean function and $h_j=A_j[\varphi]\in\mathbb R^d$ the feature contrast obtained by applying $A_j$ to each coordinate. Then $\mu_j=A_j[\mu]-b_j$ and $g_j=\mu_j+h_j^\top\xi$, with $h_j=\varphi(c,B_j^1)-\varphi(c,B_j^0)$ for two sets. Let $\sigma_j\geq0$ be the prior standard deviation of the margin, $k_j\in\mathbb R^{n_Q}$ the observation--margin covariance vector, and $W_Q\in\mathbb R^{n_Q\times n_Q}$ the observation covariance. Conditional on $\mathcal D$,
\begin{equation}
\begin{aligned}
\sigma_j^2&=h_j^\top\Sigma h_j,\qquad k_j=\Cov(y_Q,g_j)=\Phi_Q\Sigma h_j,\\
W_Q&=\Var(y_Q)=\Phi_Q\Sigma\Phi_Q^\top+\Omega_Q.
\end{aligned}
\label{eq:gaussian_covariances}
\end{equation}
Assuming $W_Q$ is positive definite, Gaussian conditioning gives
\begin{align}
\mathbb E[g_j\mid\mathcal D,y_Q]&=\mu_j+k_j^\top W_Q^{-1}(y_Q-\mu_Q),\label{eq:posterior_mean}\\
\Var(g_j\mid\mathcal D,y_Q)&=\sigma_j^2-k_j^\top W_Q^{-1}k_j.\label{eq:posterior_variance}
\end{align}
Because the conditional variance is independent of the realized outcome, the distribution of the future posterior mean is determined before executing any candidate probe,
\begin{equation}
\mathbb E[g_j\mid\mathcal D,y_Q]\sim\mathcal N(\mu_j,s_j^2(Q)),\qquad s_j^2(Q)=k_j^\top W_Q^{-1}k_j.
\label{eq:preposterior}
\end{equation}
Equation~\ref{eq:preposterior} supplies the resolved variance in Eq.~\ref{eq:resolved_variance} and the pre-observation distribution required by Theorem~\ref{thm:evsi}.

A shared no-memory execution creates correlated observation noise. If assisted-run noises $\eta_i$ are mutually independent with variances $\nu_i^2$ and an independent baseline noise $\eta_0$ has variance $\nu_0^2$, then $\varepsilon_i=\eta_i-\eta_0$ implies
\begin{equation}
\Omega_Q=\operatorname{diag}(\nu_1^2,\ldots,\nu_{n_Q}^2)+\nu_0^2\mathbf1\mathbf1^\top.
\label{eq:baseline_noise}
\end{equation}
Here $\mathbf1\in\mathbb R^{n_Q}$ is the all-ones vector. For several independent baseline groups, let $Z$ be the binary membership matrix with one row per probe and one column per reference. Let $D_0$ and $D_{\mathrm{assist}}$ be diagonal matrices of reference-noise and assisted-run variances, respectively. Then $\Omega_Q=D_{\mathrm{assist}}+ZD_0Z^\top$. For an already observed reference, $\Omega_Q$ denotes the noise covariance conditional on that observation. Covariances induced by coupled executions are included in the same matrix. Appendix~\ref{app:utility} specifies how shared references enter rollout accounting.

For a set fixed before paired execution, Eq.~\ref{eq:observed_uplift} has expectation $\Delta(c,S)$ whenever both trajectories have the prescribed marginals, including when their randomness is coupled. Labels retained after outcome-dependent candidate selection have a different sampling rule. Appendix~\ref{app:writing} states the rule used for writing-validation data.

For a fixed feature basis, let $\Sigma_Q=\Var(\xi\mid\mathcal D,y_Q)$ denote the posterior latent covariance. Gaussian conditioning gives
\begin{equation}
\Sigma_Q=\Sigma-\Sigma\Phi_Q^\top W_Q^{-1}\Phi_Q\Sigma,\qquad \mathbb E[\xi\mid\mathcal D,y_Q]=\Sigma\Phi_Q^\top W_Q^{-1}(y_Q-\mu_Q).
\label{eq:gaussian_parameter_update}
\end{equation}
This hypothetical posterior scores measurements before execution. The batch procedure in Appendix~\ref{app:probe_algorithm} holds the belief fixed during selection, and the training procedure in Appendix~\ref{app:training} uses realized labels to update the scorer. The posterior covariance is defined in its associated feature basis, so a change of basis also changes its matrix representation.

\subsection{Decision Value of Gaussian Observations}
\label{app:evsi_proof}

To establish Theorem~\ref{thm:evsi}, consider two available decisions with uncertain utilities $F^+$ and $F^-$ and margin $g=F^+-F^-$. This includes a two-set comparison and a fixed-threshold decision when the threshold is the utility of its reference action. The best expected utility before observation is $\mathbb E[F^-\mid\mathcal D]+\pos{\mathbb E[g\mid\mathcal D]}$. After observing $y_Q$, it is $\mathbb E[F^-\mid\mathcal D,y_Q]+\pos{\mathbb E[g\mid\mathcal D,y_Q]}$. Taking the pre-observation expectation cancels the reference term by the tower property. Hence binary EVSI is $\mathbb E_{y_Q\mid\mathcal D}[\pos{\mathbb E[g\mid\mathcal D,y_Q]}]-\pos{\mathbb E[g\mid\mathcal D]}$ even when the reference alternative is uncertain.

\begin{lemma}[Positive part of a Gaussian variable]
Let $X$ be Gaussian with mean $\mu$ and standard deviation $s>0$. Then $\mathbb E[\pos X]=s\phi(\mu/s)+\mu\Phi(\mu/s)$.
\end{lemma}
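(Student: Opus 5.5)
The plan is to standardize and split the expectation at zero. Write $X=\mu+sZ$ with $Z\sim\mathcal N(0,1)$. Since $s>0$, the event $X>0$ is the event $Z>-\mu/s$. Hence
\begin{equation*}
\mathbb E[\pos X]=\int_{-\mu/s}^{\infty}(\mu+sz)\,\phi(z)\,dz
=\mu\int_{-\mu/s}^{\infty}\phi(z)\,dz+s\int_{-\mu/s}^{\infty}z\,\phi(z)\,dz .
\end{equation*}
I will evaluate the two terms separately.

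For the first term, the symmetry of $\phi$ gives $\int_{-\mu/s}^{\infty}\phi(z)\,dz=1-\Phi(-\mu/s)=\Phi(\mu/s)$. For the second term, I use the identity $\phi'(z)=-z\phi(z)$, so that $z\phi(z)$ has antiderivative $-\phi(z)$. Then
\begin{equation*}
\int_{-\mu/s}^{\infty}z\phi(z)\,dz=\bigl[-\phi(z)\bigr]_{-\mu/s}^{\infty}=\phi(-\mu/s)=\phi(\mu/s),
\end{equation*}
using $\phi(z)\to0$ as $z\to\infty$ and the evenness of $\phi$. Adding the two pieces gives $s\phi(\mu/s)+\mu\Phi(\mu/s)$, as claimed.

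Nothing here is a real obstacle; the only care needed is the direction of the threshold after dividing by $s$, which is where $s>0$ is used. For the subsequent use in Theorem~\ref{thm:evsi}, I would also note the rewriting $s\phi(\mu/s)+\mu\Phi(\mu/s)-\pos{\mu}=s\psi(|\mu|/s)$, obtained by treating $\mu\ge0$ and $\mu<0$ separately with $\Phi(t)=1-\Phi(-t)$. Applying this with $\mu=\mu_j$, $s=s_j(Q)$ and the preposterior law of Eq.~\ref{eq:preposterior} yields the theorem.
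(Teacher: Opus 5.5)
Your proposal is correct and takes the same approach as the paper's proof: substitute $z=(x-\mu)/s$, integrate $(\mu+sz)\phi(z)$ over $z>-\mu/s$, and evaluate the two pieces using $\phi'(z)=-z\phi(z)$ and Gaussian symmetry. Your closing rewrite as $s\psi(|\mu|/s)$, splitting into the cases $\mu\ge0$ and $\mu<0$, also matches how the paper then proves Theorem~\ref{thm:evsi}.
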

\begin{proof}
Substituting $z=(x-\mu)/s$ in the defining integral yields
\begin{equation}
\mathbb E[\pos X]=\int_{-\mu/s}^{\infty}(\mu+sz)\phi(z)\,dz=\mu\Phi(\mu/s)+s\phi(\mu/s),
\end{equation}
using $\phi'(z)=-z\phi(z)$ and Gaussian symmetry.
\end{proof}

\begin{proof}[Proof of Theorem~\ref{thm:evsi}]
Equation~\ref{eq:preposterior} gives $\mu'_j\sim\mathcal N(\mu_j,s_j^2(Q))$. For $s_j=s_j(Q)>0$, applying the lemma and subtracting $\pos{\mu_j}$ gives
\begin{equation}
 E_j(Q)=s_j\phi(\mu_j/s_j)+\mu_j\Phi(\mu_j/s_j)-\pos{\mu_j}.
\end{equation}
For $\mu_j\geq0$, this becomes $s_j\phi(\mu_j/s_j)-\mu_j\Phi(-\mu_j/s_j)$; Gaussian symmetry gives the same expression with $|\mu_j|$ when $\mu_j<0$. This is Eq.~\ref{eq:evsi}. When $s_j=0$, the future posterior mean equals the current mean almost surely and the EVSI is zero.
\end{proof}

\begin{corollary}[Dependence on margin and resolvable uncertainty]
Writing $a=|\mu_j|$ for the absolute decision margin and $s>0$ for the resolved standard deviation, define $E(a,s)=s\phi(a/s)-a\Phi(-a/s)$. Then
\begin{equation}
\frac{\partial E}{\partial s}=\phi(a/s)>0,\qquad \frac{\partial E}{\partial a}=-\Phi(-a/s)<0,\qquad 0\leq E_j(Q)\leq\frac{s_j(Q)}{\sqrt{2\pi}}\leq\frac{\sigma_j}{\sqrt{2\pi}}.
\label{eq:evsi_monotonicity}
\end{equation}
\end{corollary}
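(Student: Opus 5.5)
Every claim in the corollary follows by elementary calculus on $E(a,s)=s\phi(a/s)-a\Phi(-a/s)$, together with the variance identity from Appendix~\ref{app:observation} for the last inequality. I will treat the two partial derivatives first, then the chain of bounds.

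\textbf{Derivatives.} Set $t=a/s$ and use $\phi'(t)=-t\phi(t)$ and $\frac{d}{dt}\Phi(-t)=-\phi(t)$. Differentiating in $s$, with $\partial t/\partial s=-a/s^2$, gives
\[
\phi(t)+s\bigl(-t\phi(t)\bigr)\bigl(-a/s^2\bigr)-a\bigl(-\phi(t)\bigr)\bigl(-a/s^2\bigr).
\]
The second and third terms are $ta\phi(t)/s$ and $-a^2\phi(t)/s^2$. Since $t=a/s$, they cancel, leaving $\phi(a/s)$, which is strictly positive. Differentiating in $a$, with $\partial t/\partial a=1/s$, gives
\[
s\bigl(-t\phi(t)\bigr)/s-\Phi(-t)+a\phi(t)/s .
\]
The first and last terms cancel, leaving $-\Phi(-a/s)$, which is strictly negative for finite $a/s$.

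\textbf{Bounds.} Nonnegativity of $E_j(Q)$ comes from Jensen's inequality, since $x\mapsto\pos x$ is convex and the preposterior mean $\mu'_j$ has expectation $\mu_j$ by Eq.~\ref{eq:preposterior}. Equivalently, $E(a,s)$ is decreasing in $a$ and tends to $0$ as $a\to\infty$, because $s\phi(t)\to0$ and $a\Phi(-a/s)\le s\phi(a/s)/(a/s)\cdot(a/s)$ by the Mills ratio. For the upper bound, $E(a,s)$ is decreasing in $a\ge0$, so $E(a,s)\le E(0,s)=s\phi(0)=s/\sqrt{2\pi}$. For the final inequality, Eq.~\ref{eq:posterior_variance} gives $s_j^2(Q)=\sigma_j^2-\Var(g_j\mid\mathcal D,y_Q)\le\sigma_j^2$, because the posterior variance is nonnegative. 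The degenerate case $s_j=0$ gives $E_j=0$ by Theorem~\ref{thm:evsi}, so the chain also holds there.

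The only delicate point is the bookkeeping in the derivative cancellations. The rest is immediate, so I expect no real obstacle.
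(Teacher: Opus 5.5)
Your proof is correct and follows the paper's argument. You differentiate using $\phi'(t)=-t\phi(t)$, bound $E$ by its value at $a=0$ via monotonicity in $a$, get nonnegativity from Jensen's inequality for the convex positive part, and get $s_j(Q)\leq\sigma_j$ from the nonnegativity of the posterior variance. Your Mills-ratio aside is valid, and $a\Phi(-a/s)\leq s\phi(a/s)$ already gives $E(a,s)\geq0$ directly without the limit step; your explicit handling of $s_j(Q)=0$ also fills in a case the paper's sketch leaves implicit.
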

\begin{proof}
Differentiation and $\phi'(t)=-t\phi(t)$ give the derivatives. The maximum at fixed $s$ occurs at $a=0$. Jensen's inequality for the convex positive-part function gives nonnegativity, and nonnegative conditional variance gives $s_j^2\leq\sigma_j^2$.
\end{proof}

The decision meaning of EVSI follows from opportunity loss, the utility forgone relative to choosing with full knowledge of the margin. Before observation this loss is $\mathbb E[\pos{g_j}\mid\mathcal D]-\pos{\mu_j}$. After observing $Q$, the corresponding expected loss is $\mathbb E[\pos{g_j}\mid\mathcal D]-\mathbb E_{y_Q\mid\mathcal D}[\pos{\mu'_j}]$, by the tower property. Their difference equals $E_j(Q)$, so EVSI measures the expected reduction in decision opportunity loss. This identity explains the local acquisition objective in Section~\ref{sec:probing}. Appendix~\ref{app:batch} connects it to the subsequent use of observations for neural-model training.

\subsection{Finite-Family Decisions and Information Quantity}
\label{app:multisets}
\label{app:information}

The aggregate $H(Q)$ in Eq.~\ref{eq:batch_acquisition} values a family of local comparisons. We relate this quantity to choosing one set from a finite family and then distinguish decision value from information quantity.

Let $\mathcal B=\{B_0,\ldots,B_{J-1}\}$ be a finite family of $J$ executable sets for a fixed task, with $F_k=F(c,B_k)$ and $\mu_k=\mathbb E[F_k\mid\mathcal D]$. Relabel a predicted-best member as $B_0$, so $\mu_0=\max_{0\leq k<J}\mu_k$. All extrema and sums over $k$ in this subsection range over these $J$ alternatives unless otherwise specified. The EVSI for selecting a single set from this family is
\begin{equation}
 E_{\mathcal B}(Q)=\mathbb E_{y_Q\mid\mathcal D}\!\left[\max_k\mathbb E[F_k\mid\mathcal D,y_Q]\right]-\max_k\mathbb E[F_k\mid\mathcal D].
\label{eq:full_evsi}
\end{equation}
This is the finite-alternative value used in knowledge-gradient reasoning~\citep{frazier2009knowledge}.

\begin{proposition}[Binary comparisons bound finite-family EVSI]
\label{prop:evsi_bounds}
For $J\geq2$, let $g_k=F_k-F_0$ for $k\ne0$ and $E_k(Q)$ be the binary EVSI for the corresponding comparison. With integrable utilities and a coherent common posterior,
\begin{equation}
\max_{k\ne0}E_k(Q)\leq E_{\mathcal B}(Q)\leq\sum_{k\ne0}E_k(Q).
\label{eq:evsi_bounds}
\end{equation}
The bounds allow correlated alternatives, and joint Gaussianity makes each binary term available in closed form.
\end{proposition}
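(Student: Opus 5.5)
Write $\mu_k'=\mathbb E[F_k\mid\mathcal D,y_Q]$ for the posterior means and $\Delta_k=\mu_k-\mu_0\le 0$ for the current margins. The plan is to reduce everything to posterior margins relative to the predicted-best set $B_0$. Because $\mu_0$ is the prior maximum, the second term of Eq.~\ref{eq:full_evsi} equals $\mu_0$. For the first term we use the pointwise identity
\[
\max_k \mu_k' = \mu_0' + \max_{k}\,(\mu_k'-\mu_0') = \mu_0' + \max_{k\ne 0}\pos{\mu_k'-\mu_0'} .
\]
The max over $k$ includes $k=0$, whose difference is $0$, so the positive part appears automatically. By the tower property (integrability and the coherent common posterior make $\mathbb E_{y_Q\mid\mathcal D}[\mu_0']=\mu_0$), this gives
\[
E_{\mathcal B}(Q)=\mathbb E_{y_Q\mid\mathcal D}\Bigl[\max_{k\ne0}\pos{\mathbb E[g_k\mid\mathcal D,y_Q]}\Bigr].
\]
By linearity of conditional expectation, $\mathbb E[g_k\mid\mathcal D,y_Q]=\mu_k'-\mu_0'$. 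The binary EVSI for comparison $k$ is, as in Appendix~\ref{app:evsi_proof}, $E_k(Q)=\mathbb E[\pos{\mathbb E[g_k\mid\mathcal D,y_Q]}]-\pos{\Delta_k}$. Since $\Delta_k\le0$, we have $\pos{\Delta_k}=0$, so $E_k(Q)=\mathbb E[\pos{\mathbb E[g_k\mid\mathcal D,y_Q]}]$.

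With this representation both inequalities are pointwise. For the lower bound, fix any $k\ne0$. Then $\max_{k'\ne0}\pos{\cdot_{k'}}\ge\pos{\cdot_k}$ holds for every realization $y_Q$. Taking expectations gives $E_{\mathcal B}(Q)\ge E_k(Q)$, and we then maximize over $k$. For the upper bound, note that all terms are nonnegative, so a maximum of nonnegative numbers is at most their sum, $\max_{k\ne0}\pos{x_k}\le\sum_{k\ne0}\pos{x_k}$. Taking expectations and using linearity gives $E_{\mathcal B}(Q)\le\sum_{k\ne0}E_k(Q)$. Correlation between alternatives plays no role, since only pointwise inequalities and linearity of expectation are used.

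The main care point is the choice of reference. The identity that removes $\pos{\Delta_k}$ from the binary EVSI relies on $B_0$ being a prior maximizer. Ties among maximizers are harmless, because then $\Delta_k=0$ and $\pos{\Delta_k}=0$ still holds. A second care point is justifying the tower-property step $\mathbb E[\mu_0']=\mu_0$, together with integrability of the maximum. Both follow from integrable utilities, because $|\max_k\mu_k'|\le\sum_k|\mu_k'|$ and each $\mu_k'$ is integrable by conditional Jensen. Under joint Gaussianity, each $E_k(Q)$ is then given in closed form by Theorem~\ref{thm:evsi}, with $\mu_j=\Delta_k$ and $s_j$ from Eq.~\ref{eq:preposterior} applied to the contrast $h_k=\varphi(c,B_k)-\varphi(c,B_0)$.
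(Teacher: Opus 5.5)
Your proof is correct and is essentially the paper's argument. Writing $g'_k=\mathbb E[g_k\mid\mathcal D,y_Q]$, you rewrite $E_{\mathcal B}(Q)$ as $\mathbb E\bigl[\max\{0,g'_1,\ldots,g'_{J-1}\}\bigr]$ by decomposing the posterior maximum relative to the predicted-best $B_0$ and applying the tower property, you use $\mu_k-\mu_0\le 0$ to reduce each binary EVSI to $\mathbb E[\pos{g'_k}]$, and you conclude from the pointwise sandwich $\pos{g'_k}\le\max\{0,g'_1,\ldots,g'_{J-1}\}\le\sum_{k\ne0}\pos{g'_k}$. This is exactly the paper's route; your remarks on ties among maximizers and on integrability of the posterior means make explicit some details the paper leaves implicit.
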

\begin{proof}
Write $g'_k=\mathbb E[F_k-F_0\mid\mathcal D,y_Q]$. Because $\mu_k-\mu_0\leq0$, $E_k(Q)=\mathbb E[\pos{g'_k}]$. Decomposing the posterior maximum relative to $F_0$ and applying the tower property yields
\begin{equation}
 E_{\mathcal B}(Q)=\mathbb E\!\left[\max\{0,g'_1,\ldots,g'_{J-1}\}\right].
\end{equation}
For every outcome, $\pos{g'_k}\leq\max\{0,g'_1,\ldots,g'_{J-1}\}\leq\sum_{k\ne0}\pos{g'_k}$. Taking expectations gives the bounds.
\end{proof}

When $\mathcal G$ consists of one comparison of every alternative in a finite family against the same predicted-best incumbent, with no duplicate or additional comparisons, its unweighted sum $H(Q)$ upper-bounds the finite-family EVSI by Eq.~\ref{eq:evsi_bounds}. For comparisons against different incumbents, $H(Q)$ instead values the collection of local decisions. Duplicate comparisons can count the same decision improvement more than once. For example, for an integrable zero-mean random value $Z$, if $F_1=F_2=Z$ and $F_0=0$, a probe revealing $Z$ gives $E_1=E_2=\mathbb E[\pos Z]$ but $E_{\mathcal B}=\mathbb E[\pos Z]$.

For the scalar Gaussian observation $y_q$ from probe $q$, with predictive variance $v_q=\Var(y_q\mid\mathcal D)>0$, define $\beta_k=\Cov(F_k,y_q\mid\mathcal D)/\sqrt{v_q}$ and $Z=(y_q-\mathbb E[y_q\mid\mathcal D])/\sqrt{v_q}\sim\mathcal N(0,1)$. The posterior means $\mu'_k=\mathbb E[F_k\mid\mathcal D,y_q]$ satisfy $\mu'_k=\mu_k+\beta_kZ$, giving
\begin{equation}
 E_{\mathcal B}(q)=\mathbb E_Z[\max_k(\mu_k+\beta_kZ)]-\max_k\mu_k.
\label{eq:scalar_kg}
\end{equation}
This expression evaluates a single final selection through an upper envelope or one-dimensional integration. By contrast, Eq.~\ref{eq:batch_acquisition} aggregates the local comparisons constructed in Appendix~\ref{app:probe_algorithm}.

This decision-value perspective also clarifies the relationship to information gain.
Let $I(g_j;y_Q\mid\mathcal D)$ denote conditional mutual information, using natural logarithms. For a Gaussian margin with $0<\sigma_j^2-s_j^2(Q)\leq\sigma_j^2$,
\begin{equation}
 I(g_j;y_Q\mid\mathcal D)=\frac12\log\frac{\sigma_j^2}{\sigma_j^2-s_j^2(Q)},\qquad s_j^2(Q)=\sigma_j^2\bigl(1-e^{-2I(g_j;y_Q\mid\mathcal D)}\bigr).
\label{eq:information_bridge}
\end{equation}
The identity follows by subtracting the conditional Gaussian entropy from the prior entropy. For a fixed comparison, $\mu_j$ and $\sigma_j$ are fixed, so mutual information and binary EVSI rank probes in the same order. Across different comparisons, EVSI additionally accounts for the decision margin and utility scale. If a continuous margin is revealed exactly, mutual information diverges while binary EVSI approaches the finite expected value of perfect information.

\subsection{Batch Acquisition and Evidence Timing}
\label{app:batch}

The batch score in Section~\ref{sec:probing} accounts for overlapping evidence before outcomes are observed. Fix $\mathcal D$, the surrogate, the comparison family $\mathcal G$, and the feasible probe pool throughout the following calculation. All expectations and covariances below are conditional on this pre-batch evidence. For a coherent model and $Q\subseteq Q'$, the tower property and conditional Jensen inequality give
\begin{equation}
 \mathbb E\!\left[\pos{\mathbb E[g_j\mid\mathcal D,y_{Q'}]}\right]\geq\mathbb E\!\left[\pos{\mathbb E[g_j\mid\mathcal D,y_Q]}\right].
\label{eq:batch_monotonicity}
\end{equation}
Thus $E_j(Q)$ and $H(Q)$ are monotone with additional observations. Their marginal gains need not decrease. For example, let $X,Y$ be independent standard Gaussian variables, set $g=X$, and consider noiseless probes $y_1=X+Y$ and $y_2=Y$. The second probe alone has no information about $g$, but after the first it reveals $g$ exactly, so its marginal decision value increases.

For a prospective scalar observation $y_q$, define the residual covariance and variance under the fixed pre-batch model as
\begin{align}
\kappa_{jq\mid Q}&=\Cov(g_j,y_q)-\Cov(g_j,y_Q)W_Q^{-1}\Cov(y_Q,y_q),\\
v_{q\mid Q}&=\Var(y_q)-\Cov(y_q,y_Q)W_Q^{-1}\Cov(y_Q,y_q).
\end{align}
When $v_{q\mid Q}>0$, block Gaussian conditioning gives
\begin{equation}
 s_j^2(Q\cup\{q\})=s_j^2(Q)+\frac{\kappa_{jq\mid Q}^{\,2}}{v_{q\mid Q}}.
\label{eq:incremental_resolved}
\end{equation}
For $Q=\emptyset$, the covariance terms involving $y_Q$ are absent and $s_j^2(\emptyset)=0$. For nonempty $Q$, the formula first conditions on $y_Q$ and then on the residual of $y_q$ after its linear prediction from $y_Q$. For a fixed scalar mean $\mu$, let $f_\mu(s)=s\psi(|\mu|/s)$ for $s>0$ and $f_\mu(0)=0$. Then
\begin{equation}
\begin{aligned}
V_t(q)=\sum_{j\in\mathcal G}\bigg[&f_{\mu_j}\!\left(\sqrt{s_j^2(Q_t)+\frac{\kappa_{jq\mid Q_t}^{\,2}}{v_{q\mid Q_t}}}\right)-f_{\mu_j}(s_j(Q_t))\bigg].
\end{aligned}
\label{eq:batch_marginal_formula}
\end{equation}
Each term uses the fixed pre-batch mean, so the expression is the difference between two prospective batch values. It accounts for scheduled measurements through their covariance without conditioning on unknown labels. For an exactly redundant observation with $v_{q\mid Q_t}=0$, its incremental contribution is defined as zero. Linear solves using a Cholesky factor of $W_Q$ avoid forming its inverse explicitly.

For a fixed batch state, let $\mathcal Q_t=\mathcal Q\setminus Q_t$ be the remaining feasible probes. The softmax in Eq.~\ref{eq:probe_softmax} maximizes
\[
\sum_{q\in\mathcal Q_t}p(q)V_t(q)+\tau_{\mathrm{evsi}}\mathcal H(p),
\qquad \mathcal H(p)=-\sum_{q\in\mathcal Q_t}p(q)\log p(q),
\]
over probability distributions $p$ on $\mathcal Q_t$, where $\mathcal H(p)$ is Shannon entropy with $0\log0=0$ and $\tau_{\mathrm{evsi}}>0$ is its weight. A Lagrange multiplier for normalization yields the sampling rule. This objective concerns the next-probe distribution at the current batch state.

A training probe supplies a label for a later scorer update, whereas Eq.~\ref{eq:full_evsi} values a decision made from the observation's posterior. To distinguish these roles, let $\mathcal T(\theta,\mathcal D,y_Q)$ denote the parameter update, let $S_\theta(c')$ be deterministic retrieval on a future task, and let $p_{\mathrm{future}}$ be the future-task distribution. The expected change in future execution uplift induced by this update is
\begin{equation}
 \mathbb E_{y_Q\mid\mathcal D}\mathbb E_{c'\sim p_{\mathrm{future}}}\!\left[\Delta\bigl(c',S_{\mathcal T(\theta,\mathcal D,y_Q)}(c')\bigr)-\Delta\bigl(c',S_\theta(c')\bigr)\right].
\label{eq:training_information_value}
\end{equation}
The implemented $H(Q)$ is computable before execution from current local comparisons. Algorithm~\ref{alg:training_flow} uses the acquired outcomes to train the shared scorer. Section~\ref{sec:rq3} examines probing within the full system, while Section~\ref{sec:rq5} compares acquisition strategies across additional rollout budgets.

\section{Algorithmic Details}
\label{app:algorithms}

Algorithm~\ref{alg:training_flow} gives training and test-time execution. The seed trajectories are $\mathcal T_0$, and $\{c_t\}_{t=1}^{n}$ is the task stream across all training epochs. The buffers $\mathcal D_v$ and $\mathcal D_{\pi}$ store value observations and recent sampled paths, respectively, while $\mathcal W$ stores the current acquisition window. The operator $\uplus$ appends records without removing repetitions, and observations retain their reference identities for pairwise supervision. \textsc{Observe} returns the memory-assisted trajectory, its no-memory reference, and their uplift from Eq.~\ref{eq:observed_uplift}, using the reference-reuse protocol in Appendix~\ref{app:utility}. \textsc{WriteMemory} implements Appendix~\ref{app:writing}, updating the memory pool and returning the selected validation record in $\mathcal D_w$ for addition to $\mathcal D_v$, regardless of admission.

\begin{algorithm}[t]
\caption{\textsc{UpliftMem} training and inference}
\label{alg:training_flow}
\small
\begin{algorithmic}[1]
\Require Frozen executor $\mathcal E$, retrieval budget $K$, acquisition interval $h$, probe budget $B_{\mathrm p}$, update indices $\mathcal I_{\mathrm{upd}}$
\Function{Train}{$\mathcal T_0,\{c_t\}_{t=1}^{n},\theta$}
    \State $(\mathcal M,\mathcal D_v)\gets\Call{Bootstrap}{\mathcal T_0,\mathcal E}$
    \State $\theta\gets\Call{Optimize}{\theta,\mathcal L_{\mathrm{value}}(\mathcal D_v)}$
    \State $\mathcal D_\pi\gets\emptyset,\quad\mathcal W\gets\emptyset$
    \For{$t=1,\ldots,n$}
        \State $C_t\gets\Call{Recall}{c_t,\mathcal M}$
        \State $(S_t,\mathbf a_t)\gets\Call{SamplePath}{\pi_\theta,c_t,C_t,K}$
        \State $(\tau_t,\tau_{\emptyset,t},y_t)\gets\Call{Observe}{c_t,S_t,\mathcal E}$
        \State $\mathcal D_v\gets\mathcal D_v\uplus\{(c_t,S_t,y_t)\}$
        \State $\mathcal D_\pi\gets\mathcal D_\pi\uplus\{(c_t,\mathbf a_t,y_t)\}$
        \State $(\mathcal M,\mathcal D_w)\gets\Call{WriteMemory}{c_t,\tau_t,\tau_{\emptyset,t},\mathcal M}$
        \State $\mathcal D_v\gets\mathcal D_v\uplus\mathcal D_w$
        \State $\mathcal W\gets\mathcal W\uplus\{(c_t,C_t,S_t,\mathbf a_t)\}$
        \If{$t\bmod h=0$}
            \State $(\mathcal Q,\mathcal G)\gets\Call{LocalComparisons}{\mathcal W}$
            \State $Q\gets\Call{SelectProbes}{\mathcal Q,\mathcal G,G_\theta,B_{\mathrm p}}$
            \State $\mathcal D_v\gets\mathcal D_v\uplus\Call{ObserveProbes}{Q,\mathcal E}$
            \State $\mathcal W\gets\emptyset$
        \EndIf
        \If{$t\in\mathcal I_{\mathrm{upd}}$}
            \State $\mathcal L\gets\mathcal L_{\mathrm{value}}(\mathcal D_v)+\mathcal L_{\mathrm{policy}}(\mathcal D_\pi)$
            \State $\theta\gets\Call{Optimize}{\theta,\mathcal L}$
            \State $\mathcal D_\pi\gets\emptyset$
        \EndIf
    \EndFor
    \State \Return $(G_\theta,\mathcal M)$
\EndFunction
\Statex
\Function{Execute}{$c,G_\theta,\mathcal M$}
    \State $C\gets\Call{Recall}{c,\mathcal M}$
    \State $S\gets\Call{BeamRetrieve}{G_\theta,c,C,K,2}$
    \State \Return $\mathcal E(c,S)$
\EndFunction
\end{algorithmic}
\end{algorithm}
\textsc{Bootstrap} constructs the initial pool and uplift observations from the seed trajectories. \textsc{SamplePath} follows Eq.~\ref{eq:retrieval_policy} and returns its terminal set and recorded path, including the states and feasible-action masks needed for policy training. \textsc{LocalComparisons} constructs the probe pool $\mathcal Q$ and comparison family $\mathcal G$ in Appendix~\ref{app:probe_algorithm}. The score $H$ in Eq.~\ref{eq:batch_acquisition} uses the current scorer and evidence at batch entry, with this belief fixed throughout selection. \textsc{Optimize} applies the configured optimizer to the stated objective using eligible observations, with $\mathcal L_{\mathrm{policy}}(\emptyset)=0$. Clearing $\mathcal D_{\pi}$ after an update prevents reuse of paths sampled under earlier scorer states. \textsc{BeamRetrieve} implements Appendix~\ref{app:search} with width two and includes the empty set among its candidates.

\subsection{Memory Set Search and Stopping}
\label{app:search}
The search in Section~\ref{sec:retrieval} operates on subsets $S\subseteq \mathcal C(c)$. Used memories and additions violating the count or token constraints are masked. A sampled $\STOP$ returns the current set, including $\emptyset$, and termination is forced when no feasible addition remains. Candidate states are deduplicated by their members and serialized consistently for scoring. The set-valued formulation treats each subset as a complete context under the memory-presentation protocol in Section~\ref{sec:problem}.

For a sequence of $\ell$ memory additions $m_1,\ldots,m_\ell$, let $S_0=\emptyset$ and $S_t=S_{t-1}\cup\{m_t\}$. The conditional gains in Eq.~\ref{eq:conditional_gain} telescope,
\begin{equation}
\sum_{t=1}^{\ell} g_\theta(m_t\mid c,S_{t-1})
=G_\theta(c,S_\ell)-G_\theta(c,\emptyset)
=G_\theta(c,S_\ell).
\label{eq:telescoping}
\end{equation}
This identity relates accumulated gains to the terminal set value. The path probabilities in Eq.~\ref{eq:retrieval_policy} also contain state-dependent normalizers, so search ranks complete sets by $G_\theta$ rather than by path likelihood.

Training samples additions and stopping actions from Eq.~\ref{eq:retrieval_policy}. At test time, beam width is $W=2$. An active beam and a cross-depth archive are initialized with $\emptyset$. At each depth, all feasible one-memory extensions are scored, up to two are retained per parent, and the two highest-valued distinct extensions form the next beam. Surviving sets enter the archive. Search continues to $K$ or until no feasible extension remains and returns the highest-valued archived set using a deterministic tie rule. Retaining an intermediate set with negative immediate gain allows later additions to expose complementarity.

For $N=|\mathcal C(c)|$, beam width $W$, and memory-count cap $K$, this procedure requires $O(WNK)$ set-score evaluations, excluding recall and executor cost. Deduplication can reduce this count.

\subsection{Prospective Probe Batch Construction}
\label{app:probe_algorithm}

For each task in the preceding acquisition window, retrieval supplies a selected set $S$ and the recalled pool $\mathcal C(c)$. Feasible local candidates comprise the complete set $S$ and one-memory edits of it, including removals $S\setminus\{m\}$, additions $S\cup\{m'\}$, and replacements $(S\setminus\{m\})\cup\{m'\}$ for $m\in S$ and $m'\in \mathcal C(c)\setminus S$. The edit operator is also defined on retrieval prefixes, subject to the same feasibility constraints.
All candidates satisfy the original retrieval constraints. Distinct sets are identified by their members, consistently serialized, and deduplicated. The feasible pool $\mathcal Q$ consists of task--set pairs $q=(c_q,B_q)$, each of which supplies one complete-set observation when executed.

The comparison family $\mathcal G$ consists of the local choices exposed by the candidate neighborhood. A comparison against a removal or addition assesses whether retaining or extending the current set changes expected utility. Comparing replacements of a shared base assesses which continuation is preferred. A stopping comparison uses the difference between a feasible extension and its current base, with zero as the reference margin. The functionals $A_j$ and references $b_j$ in Appendix~\ref{app:observation} encode these differences. Each distinct comparison is counted once with a consistent orientation, since reversing its two alternatives leaves binary EVSI unchanged. The comparison family and feasible probe pool are fixed for each acquisition batch.

For the fixed local surrogate, initialize the scheduled batch as $Q_0=\emptyset$ and let $\mathcal Q_t=\mathcal Q\setminus Q_t$ contain its remaining feasible probes. For every $q\in\mathcal Q_t$, use Eq.~\ref{eq:incremental_resolved} to calculate the resolved variances for $Q_t\cup\{q\}$, then use Eq.~\ref{eq:batch_marginal_formula} to compute the full aggregate increment $V_t(q)=H(Q_t\cup\{q\})-H(Q_t)$ from Eq.~\ref{eq:batch_acquisition}. With acquisition temperature $\tau_{\mathrm{evsi}}>0$, the next-probe distribution is
\begin{equation}
p_t(q)=\frac{\exp\!\left(V_t(q)/\tau_{\mathrm{evsi}}\right)}{\sum_{q'\in\mathcal Q_t}\exp\!\left(V_t(q')/\tau_{\mathrm{evsi}}\right)},\qquad q\in\mathcal Q_t.
\label{eq:probe_softmax}
\end{equation}
Algorithm~\ref{alg:training_flow} completes batch selection before executing its probes. The incremental covariance calculation in Eq.~\ref{eq:incremental_resolved} accounts for evidence already scheduled under a fixed pre-batch belief. Between batches, the features and covariance are refreshed consistently with the scorer. Appendix~\ref{app:utility} accounts for selected executions and their no-memory references.

\subsection{Iterative Memory Writing and Validation}
\label{app:writing}

\begin{figure}[t]
    \centering
    \includegraphics[width=\linewidth]{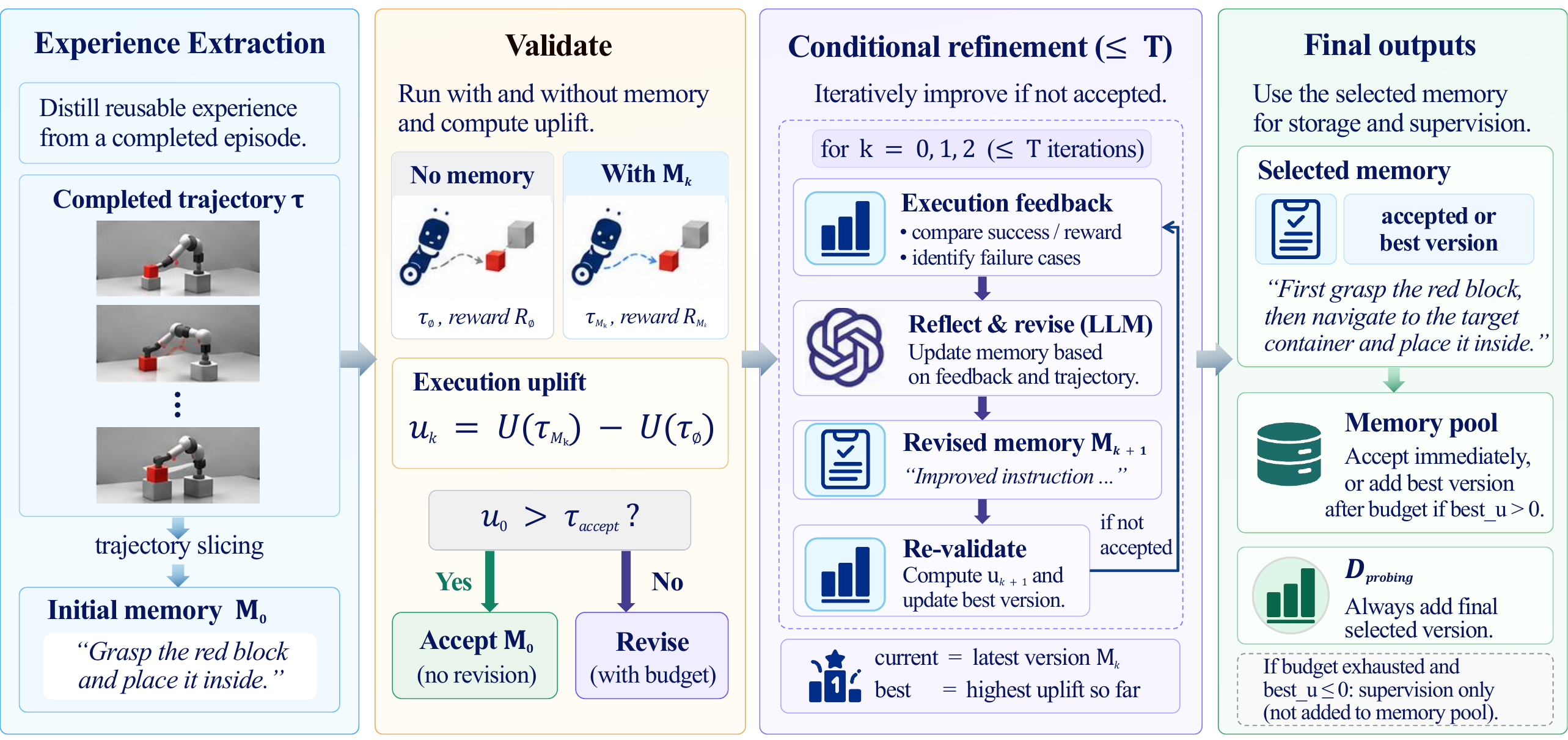}
    \caption{Execution-validated memory writing. Completed trajectories yield memory candidates that are validated on their source tasks and revised when needed. The selected version supplies value supervision, while validated uplift governs memory-pool admission.}
    \label{fig:memory_writing}
\end{figure}

The writing stage of Algorithm~\ref{alg:training_flow} uses completed episodes with either sign of retrieval uplift. An episode's label in Eq.~\ref{eq:observed_uplift} evaluates the retrieved context, so each newly distilled memory requires its own validation. Trajectory slicing distills an initial candidate $M_0$ using the observed actions and outcomes, including reusable procedures and corrective guidance \citep{zhao2024expel,cao2026reme}. As illustrated in Figure~\ref{fig:memory_writing}, the frozen executor therefore validates each candidate $M_k$, the memory after $k$ revisions, alone on its source task $c$. Against the cached no-memory reference, its observed uplift is
\begin{equation}
u_k=U(\tau_{\{M_k\}})-U(\tau_{\emptyset}).
\label{eq:writing_validation}
\end{equation}

A candidate with $u_k>\tau_{\mathrm{accept}}=0$ is accepted immediately. Otherwise, the writer uses the realized trajectory, outcome, and failure diagnostics to revise the latest candidate into $M_{k+1}$. The highest-uplift version is tracked separately as revision proceeds, for at most $T=3$ revisions. The selected candidate is the first version passing the threshold, or the highest-uplift version when the budget is exhausted.

The selected memory, source task, and observed uplift enter $\mathcal D_v$ whether or not admission occurs. The pool admits the selected version when its uplift exceeds $\tau_{\mathrm{accept}}$. Admission uses singleton validation on the source task, whereas $G_\theta$ evaluates later combinations. Appendix~\ref{app:utility} counts every executed validation attempt, and Appendix~\ref{app:prompt_revision} gives the revision interface.

\section{Training Details}
\label{app:training}

The following implementation details instantiate the scorer and learning objectives in Sections~\ref{sec:uplift} and~\ref{sec:learning}.

\subsection{Scorer Parameterization}
\label{app:scorer}
The scorer represents the task as a query and the complete memory set as a document under the instruction in Appendix~\ref{app:prompt_scorer}. Let $z_\theta(c,S)$ be its yes-minus-no token-logit score. Affine calibration maps this score to utility units,
\begin{equation}
G_\theta(c,S)=
\begin{cases}
a\,z_\theta(c,S)+b,&S\ne\emptyset,\\
0,&S=\emptyset,
\end{cases}.
\label{eq:scorer_calibration}
\end{equation}
The scale $a\in\mathbb R$ and offset $b\in\mathbb R$ are optimized jointly with the low-rank adaptation (LoRA) parameters. The notation $\theta$ includes these calibration coefficients and the trainable adapter parameters. The empty-set convention anchors predictions to the same reference as $\Delta(c,\emptyset)=0$.

\subsection{Value Supervision}
\label{app:losses}
Regression in Eq.~\ref{eq:main_value_losses} weights uplift magnitude using
\begin{equation}
w_{\mathrm{mag}}(y)=\max\{w_{\min},\min(1,|y|/\tau_{\mathrm{ref}})\},
\label{eq:magnitude_weight}
\end{equation}
where $0<w_{\min}\leq1$ is the minimum weight and $\tau_{\mathrm{ref}}>0$ is the scale at which it reaches one. Sign supervision retains $|y|>\epsilon_{\mathrm{sign}}$. Pairwise supervision retains sets from the same task and realized no-memory reference with $|y_a-y_b|>\epsilon_{\mathrm{rank}}$, where $\epsilon_{\mathrm{rank}}\geq0$ is the minimum label-gap threshold. Preferences follow the larger observed uplift, including when both labels have the same sign.

\subsection{Outcome-Weighted Retrieval-Path Learning}
\label{app:policy_training}
For an executed retrieval path in Eq.~\ref{eq:main_policy_loss}, let $y_i=y(c_i,S_{i,L_i})$ be the terminal set's observed uplift. The path length $L_i$ includes a sampled terminating $\STOP$. Its signed outcome weight is
\begin{equation}
\omega_i=\begin{cases}
0,&|y_i|\leq\epsilon_{\mathrm{policy}},\\
\min(y_i,A_{\max}),&y_i>\epsilon_{\mathrm{policy}},\\
\eta_{\mathrm{neg}}\max(y_i,-A_{\max}),&y_i<-\epsilon_{\mathrm{policy}},
\end{cases}
\label{eq:policy_weight}
\end{equation}
where $\epsilon_{\mathrm{policy}}\geq0$ excludes near-zero outcomes, $A_{\max}>0$ clips their magnitude, and $\eta_{\mathrm{neg}}\in[0,1]$ scales negative feedback.

Policy updates use the latest batch of paths sampled under the current scorer, evaluating their likelihoods with the recorded set states and action masks. Paths from earlier batches are not replayed for this objective.

\section{Experimental Protocols}
\label{app:experimental_protocols}

This section specifies the protocols behind the complete-workflow and controlled-retrieval evaluations in Section~\ref{sec:experiments}.

\subsection{Benchmarks and Data Partitions}
\label{app:benchmarks}

Table~\ref{tab:splits} gives the partitions used for the agent benchmarks in Section~\ref{sec:rq1}. Seed and runtime counts refer to execution samples or trajectories that provide training experience and memory-construction data. Evaluation tasks and their execution feedback are excluded from these updates.
\begin{table}[t]
\centering\small
\caption{Data partitions for the agent benchmarks. Seed and runtime counts denote execution samples or trajectories.}
\label{tab:splits}
\setlength{\tabcolsep}{5pt}
\begin{tabular}{@{}lrrl@{}}
\toprule
Benchmark & Seed & Runtime & Evaluation \\
\midrule
ALFWorld & 500 & 1{,}200 & 140 valid-seen / 134 valid-unseen \\
WebShop & 500 & 2{,}300 & 500 official test goals \\
BigCodeBench & --- & 798 & 342 held-out manifest tasks \\
\bottomrule
\end{tabular}
\end{table}

We partition MemSyco-Bench into 1,085 training and 465 test instances across five scenarios using a task-stratified 70:30 split with random seed 42. Instances sharing the same canonical dialogue are assigned to the same partition to prevent dialogue overlap between training and test sets (Table~\ref{tab:memsyco_split}).

\begin{table}[t]
\centering\small
\caption{Dialogue-grouped MemSyco-Bench partition with seed 42. Scenario names follow the benchmark.}
\label{tab:memsyco_split}
\begin{tabular}{@{}lrrr@{}}
\toprule
Scenario & Total & Train & Test \\
\midrule
Personalized Memory Use & 300 & 210 & 90 \\
Valid Memory Selection & 350 & 245 & 105 \\
Memory Evidence Conflict & 300 & 210 & 90 \\
Contextual Scope Control & 300 & 210 & 90 \\
Objective Fact Judgment & 300 & 210 & 90 \\
\midrule
Total & 1{,}550 & 1{,}085 & 465 \\
\bottomrule
\end{tabular}
\end{table}

\subsection{Memory Sources and Evaluation Protocols}
\label{app:protocol}
Section~\ref{sec:rq1} evaluates complete memory workflows. 
UpliftMem constructs its initial memory pool during bootstrap initialization and extends it through iterative memory writing, while comparison systems use their own construction and retrieval procedures. ReAct is the reference without external memory. These end-to-end results include the effects of both memory content and retrieval.

Section~\ref{sec:rq2} isolates retrieval by comparing native and UpliftMem retrieval within the same post-extraction store, executor, and evaluator. MemSyco-Bench supplies a prior dialogue and query. Its five memory sources are NaiveRAG, Mem0, A-MEM, MemGPT, and LightMem, with NaiveRAG abbreviated RAG. LightMem follows the benchmark's integrated implementation.

BigCodeBench uses a task-specific retrieval view. Its stored records distinguish successful implementation procedures from diagnostic failure reflections, while similarity recall, valuation, probing, and generation use the successful implementation-evidence projection. Injected evidence excludes diagnostic-only reflections, source-task contracts, and source-specific literals. Appendices~\ref{app:prompt_actors} and~\ref{app:prompt_writers} give the injected view and stored record contracts.

\subsection{Utility Configuration and Rollout Accounting}
\label{app:utility}
The utility in Eq.~\ref{eq:task_utility} uses a fixed performance measure $R(\tau)$, cost measure $C(\tau)$, cap $C_{\max}$, efficiency weight $\gamma$, and threshold $\rho$ within each benchmark protocol. Cost counts actions in ALFWorld and WebShop and injected implementation-evidence tokens in BigCodeBench. The latter excludes memory headings, scope metadata, the task prompt, and generated code. This trajectory-level cost is distinct from the number of executions used to collect training labels.

Training accounting includes no-memory reference executions, ordinary memory-assisted executions, acquired probes, and every executed writing-validation attempt. An execution reused for several purposes is counted once. Reference identities indicate which labels share baseline noise, as modeled in Appendix~\ref{app:observation}. The retained writing label follows Appendix~\ref{app:writing}, while cost accounting includes both retained and discarded validation attempts.

Probe-policy comparisons use a common candidate-pool and accounting convention. The number of newly acquired probes measures sample allocation, whereas total training cost also includes any required reference and validation executions. At evaluation, feedback used to score the returned trajectory does not trigger additional probe executions.

\subsection{Models and Implementation}
\label{app:config}
The frozen task executors are Qwen3-4B and Qwen3-8B \citep{yang2025qwen3}. Each has an independently adapted scorer initialized from Qwen3-Reranker-0.6B \citep{zhang2025qwen3embedding} and trained with LoRA \citep{hu2022lora}. Table~\ref{tab:training_config} collects the training, acquisition, and search settings.

The task counter continues across three training epochs, triggering acquisition every $h=400$ processed tasks. Each window supplies at most $B_{\mathrm p}=300$ probe executions in Algorithm~\ref{alg:training_flow}.

\begin{table}[htbp]
\centering\small
\caption{Scorer training, probe acquisition, and retrieval settings.}
\label{tab:training_config}
\begin{tabular}{@{}p{0.62\linewidth}p{0.30\linewidth}@{}}
\toprule
Setting & Value \\
\midrule
LoRA rank / scaling / dropout & $8$ / $16$ / $0.1$ \\
Learning rate & $10^{-5}$ \\
Training epochs & $3$ \\
Processed tasks per acquisition event & $400$ \\
Maximum new probe executions per event & $300$ \\
Recalled memories per task & $50$ \\
Test-time beam width & $2$ \\
Memory cap for ALFWorld / WebShop / BCB & $5$ / $5$ / $3$ \\
Main MemSyco memory cap & $5$ \\
Additional MemSyco memory caps & $3$, $10$ \\
Maximum writing revisions & $3$ \\
Writing acceptance and admission threshold & $0$ \\
\bottomrule
\end{tabular}
\end{table}

\subsection{Evaluation Metrics and Statistical Reporting}
\label{app:metrics}

ALFWorld reports task success rate and environment steps on valid-seen and valid-unseen tasks. WebShop reports mean reward, success rate, and interaction steps. BigCodeBench reports the proportion of problems whose generated solutions pass all applicable tests, both overall and on the hard subset. These metrics are reported separately from the training utility defined in Eq.~\ref{eq:task_utility}.

MemSyco-Bench covers five scenarios in two groups. Objective Fact, Scope Control, and Evidence Conflict evaluate when memory should influence a response using accuracy and sycophancy (Syco). Personalized Use and Valid Selection evaluate how memory is used through accuracy and correct-memory use (Correct), or accuracy and outdated-memory use (Outdated), respectively. Decision and Syco. summarize accuracy and sycophancy across the first three scenarios, while Select and Old correspond to Valid Selection accuracy and outdated-memory use.

Direction-aligned gains compare UpliftMem with native retrieval within each memory store. They use UpliftMem minus native scores for higher-is-better metrics and the reversed difference for sycophancy and outdated-memory use. Cross-store summaries weight the five sources equally. In Table~\ref{tab:app-k-sens}, each scenario first averages its two direction-aligned metric gains and the five stores, and Overall averages the five scenarios. This gain summary is distinct from the absolute metrics in Figure~\ref{fig:budget}.

Each configuration is trained once. ALFWorld, WebShop, and BigCodeBench are evaluated three times with a fixed checkpoint. On MemSyco-Bench, the main results are evaluated five times, while the additional ablation variants are evaluated three times. The Native and Full results reused in the ablation tables retain their original five-run statistics. Reported standard deviations summarize inference-run variability conditional on the trained checkpoint. MemSyco scores are summarized within each source before equal-weight averaging. Table~\ref{tab:app-main-k5} reports per-source dispersion, and Tables~\ref{tab:app-abl-k5-4b} and~\ref{tab:app-abl-k5-8b} report the per-source ablation means.

\section{Prompts for Execution, Valuation, and Memory Update}
\label{app:prompts}
The interfaces below implement the executor and scorer in Sections~\ref{sec:preliminaries}--\ref{sec:learning} and the writer in Appendix~\ref{app:writing}. Literal templates are distinguished from builder fields and structured output contracts. Braced fields are populated at runtime.

\subsection{Information Boundaries}
The executor receives the public task, observations, actions where applicable, and retrieved memory. The current task defines the requirements, while memory supplies transferable experience. Hidden goals, test code, expected answers, and evaluator internals are excluded from executor prompts. Writers receive trajectories and outcome diagnostics, but stored guidance is grounded in visible evidence. Quantitative execution outcomes supervise the value scorer.

\begin{table}[ht]
\centering\small
\caption{Prompt interfaces and their principal inputs and outputs.}
\label{tab:prompt_interfaces}
\begin{tabular}{@{}>{\raggedright\arraybackslash}p{0.18\linewidth}>{\raggedright\arraybackslash}p{0.45\linewidth}>{\raggedright\arraybackslash}p{0.27\linewidth}@{}}
\toprule
Interface & Principal inputs & Output \\
\midrule
Executor & Public task, observation, retrieved set & Environment action or code \\[2pt]
Value scorer & Task context, complete candidate set & Calibrated uplift score \\[2pt]
Memory writer & Task, trajectory, outcome, diagnosis & Transferable procedure or rule \\[2pt]
Revision & Latest candidate, paired utility, realized trajectory & Revised candidate \\
\bottomrule
\end{tabular}
\end{table}

\subsection{Executor Prompts}
\label{app:prompt_actors}
The three executors use different action and generation contracts. E1--E3 give their system instructions and the message fields through which retrieved memory enters the task. The same executor instructions and examples apply to paired memory-assisted and no-memory executions.

\begin{promptbox}[unbreakable]{E1 \quad ALFWorld execution}
\promptrole{System}\begin{lstlisting}[style=prompttext]
You are an ALFWorld household robot agent. Follow the ReAct ALFWorld
transcript style from the examples. Do not output hidden reasoning,
<think> tags, markdown, or explanations. /no_think
At every turn, output exactly one environment command. Prefer the
provided Available actions list when it is present.
\end{lstlisting}
\promptrole{Initial user message}\begin{lstlisting}[style=prompttext]
Examples:
{example_texts}

Relevant memories:
[{memory_id}] {memory_text}
...

Now solve the next ALFWorld task in the same command style.
Return exactly one next action line.

{task_description}

Available actions: {command_1}; {command_2}; ...
>
\end{lstlisting}
\promptrole{Subsequent observation}\begin{lstlisting}[style=prompttext]
{observation}
Available actions: {command_1}; {command_2}; ...
>
\end{lstlisting}
\end{promptbox}
The ReAct examples are selected by task type. The current admissible-command list is shared by prompt construction and action validation. Retrieved trajectories describe previous experience; the current episode begins from its own initial observation.

\begin{promptbox}[breakable]{E2 \quad WebShop execution}
\promptrole{System}\begin{lstlisting}[style=prompttext]
You are a WebShop ReAct agent. Reply exactly:
Thought: <short reasoning>
Action: <one action>

Only use search[keywords] on the Search page or click[exact listed text]. 

Page-aware policy:
1. On the Search page, issue one concise query containing the product type and distinctive non-price attributes.
2. On the Results page, open an unvisited product whose title matches the product type and does not contradict required attributes.
3. On a Product page, use visible page evidence to satisfy instruction-related option groups. Select one matching visible value per required group. Never substitute conflicting pack counts, measurements, or values. If all visible values conflict, return to results.

Rules:
1. Choose only actions executable on the current page and listed in Available actions.
2. Do not repeat identical actions or invent clickables.
\end{lstlisting}
\promptrole{Current user message}\begin{lstlisting}[style=prompttext]
Instruction:
{instruction_text}

Retrieved memories (reusable guidance; do not copy old product IDs/names):
{memory_block}

Recent actions:
{history_block}

Current observation:
{observation_text}

Available actions:
{available_actions_text}

Return exactly:
Thought: <brief reasoning>
Action: search[...] or click[...]
\end{lstlisting}
\end{promptbox}

\begin{promptbox}[breakable]{E3 \quad BigCodeBench execution}
\promptrole{System}\begin{lstlisting}[style=prompttext]
You are an expert Python programmer solving BigCodeBench coding tasks.

You may receive implementation notes distilled from successful solutions to related tasks. Use a note only when its contract, library, and data flow apply to the current task. The current task is the sole specification; memory is supporting evidence, not a solution to copy.

Return one self-contained Python solution in a markdown code block, without explanation.

Hard constraints:
- Preserve the starter code, function signature, return contract, specified exceptions, object types, and observable edge-case requirements in the current task.
- Use the libraries and APIs required by the task, import every module used, and do not replace a required dependency because a memory reports a different execution environment.
- Network access, file I/O, randomness, time-based behavior, plotting, and logging are allowed when required by the task, but unrelated side effects should not be introduced.
\end{lstlisting}
\promptrole{User task}The task message is generated by the official \texttt{get\_prompt(task, prompt\_split="instruct")} interface.
\promptrole{Retrieved memory view}\begin{lstlisting}[style=prompttext]
[Retrieved Memory Context]
# Retrieved Successful Implementation Notes
Extract applicable implementation evidence for the current task. Do not copy the source task's signature, literals, return contract, or side effects.

## Note {index} [SUCCESS]
Scope: family={bcb_task_type}; libraries={library_list}
Reusable implementation evidence:
{implementation_only}
\end{lstlisting}
\end{promptbox}

\subsection{Set-Value Scoring Prompt}
\label{app:prompt_scorer}
The \texttt{memory} placeholder contains the complete candidate set under the serialization convention in Appendix~\ref{app:search}. The following template produces the yes/no logits calibrated in Appendix~\ref{app:scorer}.

\begin{promptbox}[breakable]{V1 \quad Uplift scorer}
\promptrole{Chat template}\begin{lstlisting}[style=prompttext]
<|im_start|>system
Judge whether the Document meets the requirements based on the Query and the Instruct provided. Note that the answer can only be "yes" or "no".<|im_end|>
<|im_start|>user
<Instruct>: {dataset_instruction}
<Query>: {context}
<Document>: {memory}
<|im_end|>
<|im_start|>assistant
<think>

</think>

\end{lstlisting}
\promptrole{Dataset instructions}\begin{lstlisting}[style=prompttext]
ALFWorld: Predict whether the memory will causally improve the ALFWorld agent for the query context.

WebShop: Predict whether the memory will causally improve the WebShop agent for the query context.

BigCodeBench: Predict whether this memory will causally improve correctness for the current BigCodeBench coding task.
\end{lstlisting}
\end{promptbox}
\subsection{Memory Construction Prompts}
\label{app:prompt_writers}
The writer in Appendix~\ref{app:writing} uses different memory representations for interactive control, shopping, and code generation. W1--W4 give its templates, builder fields, and output contracts. Task outcomes describe the executed trajectory, whereas retrieval uplift compares it with the no-memory reference, so a successful trajectory can still have negative uplift.

\begin{promptbox}[breakable]{W1 \quad Successful trajectory proceduralization}
\promptrole{User template}\begin{lstlisting}[style=prompttext]
You are provided with a query and a trajectory taken to solve the query. The trajectory consists of multiple steps of thought, action and observation.

Your task is to generate a workflow based on critical steps to help solve similar queries in the future.

A critical step is one that:
- Has a significant impact on fulfilling the query
- The action is meaningful and contributes to the goal
- The action's outcome is successful

Important: Write the workflow as a natural, coherent paragraph, not a bullet list.

-----EXAMPLE WORKFLOW----
To solve this query, begin by identifying the target object's likely location based on the task description. Navigate to that location and search for the object systematically. Once found, pick up the object and verify you have the correct item. Then, determine the destination where the object should be placed, navigate there, and complete the placement action. Finally, confirm the task is complete by checking the environment state.
-----EXAMPLE END----

Query:
{query}

Trajectory:
{trajectory}

Output the workflow without any explanation or context:
\end{lstlisting}
\promptrole{System constraints}
The system message requests \texttt{/no\_think}, suppresses hidden reasoning, and returns only the workflow memory.
\end{promptbox}
\begin{promptbox}[breakable]{W2 \quad ALFWorld failure-aware construction}
\promptrole{Input fields}
Task, task type, failed trajectory, failure category, repetition rate, admissible-action mismatch, target-object confusion, last action, and a same-type reference memory.
\promptrole{Output-contract excerpt}
\begin{lstlisting}[style=prompttext]
Propose a procedural memory of 3-7 numbered steps. Focus on:
1. What to verify before acting
2. Which admissible-action patterns are preferred
3. How to identify the correct target object
4. How to recover without repeating the same failed action

Do NOT mention specific room layouts or object IDs.
Output ONLY the numbered procedure.
\end{lstlisting}
\end{promptbox}
\begin{promptbox}[breakable]{W3 \quad WebShop memory construction}
\promptrole{Trajectory proceduralization template}\begin{lstlisting}[style=prompttext]
Extract ONE short reusable WebShop memory from this trajectory.

Instruction: {instruction}
Trajectory:
{trajectory_text}
Reward: {reward}

Output exactly:
Type: <option_selection | premature_buy_prevention | search_query | result_selection | loop_recovery>
Trigger: <the attribute/option situation when to use this>
Action: <one imperative rule>
\end{lstlisting}
\promptrole{Failure-aware template}\begin{lstlisting}[style=prompttext]
Write ONE short WebShop memory for the failed trajectory.

Public instruction: {instruction}
Trajectory: {trajectory_text}
Diagnosis: {public_failure_signals}
Public trajectory diff: {public_failure_diff}
Optional related memories: {references_or_empty}

Output exactly:
Type: <option_selection | premature_buy_prevention | search_query | result_selection | loop_recovery>
Trigger: <when to use this>
Action: <one imperative rule that prevents the failure>
\end{lstlisting}
\promptrole{Construction constraints}
Each memory contains one atomic rule supported by the public trajectory. Source-specific prices, product identifiers and names, exact queries, and literal search/click actions are excluded. Query revision follows the current page state. For failures, the type preference is option selection, premature purchase, search query, result selection, then loop recovery, subject to support from visible actions and pages.
\end{promptbox}
\begin{promptbox}[breakable]{W4 \quad BigCodeBench memory construction}
\promptrole{Success output contract}\begin{lstlisting}[style=prompttext]
[MEMORY TYPE] SUCCESS_PROCEDURE
[TASK]
<short task abstraction>

[EXECUTION TRAJECTORY]
Contract: <only the public input/output or exception contract relevant to transfer>
Implementation: <concise API calls and data flow evidenced by the passing code>
Boundary: <one explicitly supported edge condition, or "No additional boundary condition established">
Verified idiom: <optional exact code fragment of at most 16 lines; include its return when applicable>
\end{lstlisting}
\promptrole{Failure output contract}\begin{lstlisting}[style=prompttext]
[MEMORY TYPE] FAILURE_REFLECTION
[TASK]
<short task abstraction>

[REFLECTION]
<one evidence-supported bug class, why it violates the public contract, and one reusable correction>
\end{lstlisting}
\promptrole{Construction constraints}
Success claims are grounded in the public task and visible passing code. The writer preserves supported contracts and implementation evidence, and excludes invented edge cases, API requirements, and references to tests or the evaluator. A failure reflection identifies one supported public-contract mismatch and a reusable correction. The current failed trajectory supplies the primary evidence, with related references included when available. Task-required dependencies remain part of the contract.
\end{promptbox}
The construction constraints accompanying W1, W3, and W4 summarize the corresponding system or builder requirements. The BigCodeBench contracts describe the stored record, while the implementation-only view passed to the executor is shown in Appendix~\ref{app:prompt_actors}.

\subsection{Revision and Admission}
\label{app:prompt_revision}
The revision loop in Appendix~\ref{app:writing} passes the latest candidate, its realized trajectory, and its validation feedback to W5. Table~\ref{tab:memory_schemas} gives the environment-specific output contracts appended to that wrapper.

\begin{promptbox}[breakable]{W5 \quad Unified revision}
\promptrole{Common prompt wrapper}\begin{lstlisting}[style=prompttext]
/no_think
Revise a procedural memory that did not provide enough uplift on a {env_name} task.

## Context
Task: {task_description}
{optional_metadata}

## Current memory (latest validated candidate)
{current_memory}

## Agent's actual trajectory with this memory
{trajectory_text}

## Outcome
- Success: {success}
- Total reward: {total_reward}
- Steps: {total_steps}/{max_steps}
- Utility with memory: {current_utility}
- Utility without memory: {baseline_utility}
- Uplift: {current_uplift} (target > {tau_accept})

## Outcome diagnosis
{failure_signals_rendered}

## Optional related memories
{reference_memories_or_empty}

## Task
Treat the current memory as the latest validated candidate. Contrast its intended procedure with the actual trajectory and outcome. Preserve only steps that the observable evidence supports, identify one concrete decision that limited the result, and revise that decision without reintroducing previously observed failures. Do not claim that an unobserved behavior worked.

Briefly identify the observed gap in the current guidance, then rewrite it as a short transferable rule.
\end{lstlisting}
\end{promptbox}
\begin{table}[ht]
\centering\small
\caption{Environment-specific output contracts appended to the revision wrapper.}
\label{tab:memory_schemas}
\begin{tabular}{@{}>{\raggedright\arraybackslash}p{0.26\linewidth}>{\raggedright\arraybackslash}p{0.66\linewidth}@{}}
\toprule
Environment & Revision output \\
\midrule
ALFWorld & \texttt{WHY IT FAILED}, followed by a procedure of three to seven abstract steps. \\[3pt]
WebShop & \texttt{WHY IT FAILED}, followed by one \texttt{Type / Trigger / Action} rule. \\[3pt]
BigCodeBench, passing candidate & \texttt{SUCCESS\_PROCEDURE} with \texttt{Contract}, \texttt{Implementation}, and \texttt{Boundary}. \\[3pt]
BigCodeBench, failing candidate & \texttt{FAILURE\_REFLECTION} identifying one unresolved bug class. \\
\bottomrule
\end{tabular}
\end{table}

\section{Additional Probe Strategy Efficiency Results}
\label{app:probe_efficiency}

We evaluate the same four probe acquisition strategies on ALFWorld and BigCodeBench. In addition to cumulative success rate, we report the success rate of the checkpoint at each probe budget.

\subsection{ALFWorld}

\begingroup
\setlength{\intextsep}{6pt}
\setlength{\columnsep}{10pt}
\begin{wrapfigure}{r}{0.48\columnwidth}
    \centering
    \includegraphics[width=\linewidth]{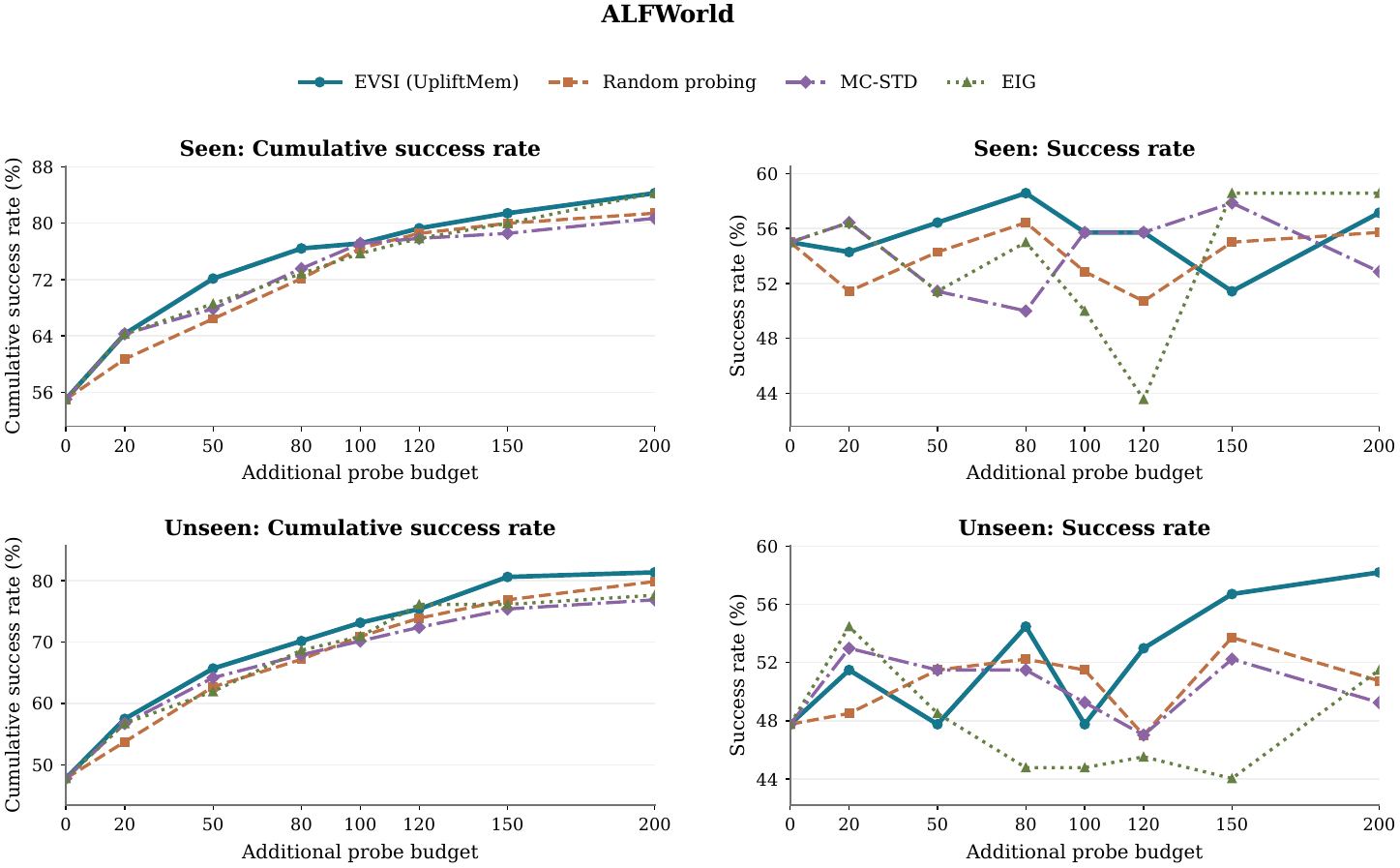}
    \captionsetup{font=footnotesize,labelfont=normalfont,labelsep=period,skip=3pt}
    \caption{Probe-budget curves on ALFWorld (seen: top; unseen: bottom).}
    \label{fig:probe_efficiency_alfworld}
\end{wrapfigure}
Figure~\ref{fig:probe_efficiency_alfworld} reports the seen and unseen validation splits separately. At 50 probes, EVSI reaches a cumulative success rate of 72.1\% on seen tasks and 65.7\% on unseen tasks, compared with 66.4\% and 62.7\% for random probing. At 200 probes, EVSI reaches 84.3\% and 81.3\%, respectively, versus 81.4\% and 79.9\% for random probing. EVSI leads or ties the other strategies at every nonzero budget on seen tasks and leads at six of seven nonzero budgets on unseen tasks. EIG ties EVSI on seen tasks at 200 probes and slightly exceeds it on unseen tasks at 120 probes. The checkpoint success rates fluctuate more than cumulative coverage: at 200 probes, EIG slightly exceeds EVSI on seen tasks (58.6\% versus 57.1\%), whereas EVSI has the highest success rate on unseen tasks (58.2\%). Thus, EVSI's advantage on ALFWorld is clearest in cumulative task coverage.
\par\endgroup

\subsection{BigCodeBench}

\begingroup
\setlength{\intextsep}{6pt}
\setlength{\columnsep}{10pt}
\begin{wrapfigure}{r}{0.48\columnwidth}
    \centering
    \includegraphics[width=\linewidth]{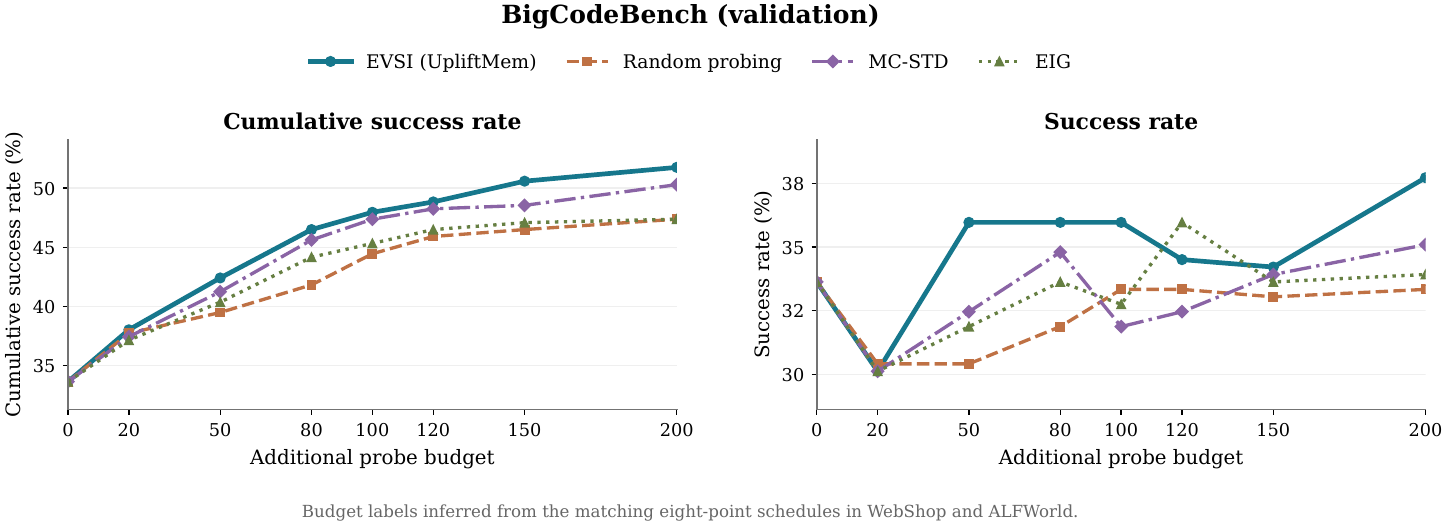}
    \captionsetup{font=footnotesize,labelfont=normalfont,labelsep=period,skip=3pt}
    \caption{Probe-budget curves on the BigCodeBench validation split.}
    \label{fig:probe_efficiency_bcb}
\end{wrapfigure}
Figure~\ref{fig:probe_efficiency_bcb} shows that EVSI achieves the highest cumulative success rate at every nonzero budget on BigCodeBench. With 50 probes, EVSI reaches 42.4\% cumulative success, compared with 39.5\% for random probing and 41.2\% for MC-STD. At 200 probes, it reaches 51.8\%, compared with 47.4\% for random probing and 50.3\% for MC-STD. EVSI also has the highest checkpoint success rate at the final budget, reaching 37.7\% versus 35.1\% for MC-STD and 33.3\% for random probing. These results show that EVSI's coverage advantage extends to BigCodeBench.
\par\endgroup

\section{Additional MemSyco Results}
\label{app:memsyco_extra}

Table~\ref{tab:app-main-k5} provides the absolute per-store scores behind the retrieval gains in Section~\ref{sec:rq2}. Tables~\ref{tab:app-abl-k5-4b} and~\ref{tab:app-abl-k5-8b} decompose the ablations in Section~\ref{sec:rq3} by memory source. Table~\ref{tab:app-k-sens} reports the scenario-level gains discussed in Section~\ref{sec:rq4}, using the averaging rules in Appendix~\ref{app:metrics}.

\begin{table*}
\centering\small
\setlength{\tabcolsep}{3pt}
\resizebox{\textwidth}{!}{%
\begin{tabular}{llcccccccccc}
\toprule
\multirow{3}{*}{Memory System} & \multirow{3}{*}{Variant} &
\multicolumn{6}{c}{\textbf{When to Use Memory}} & \multicolumn{4}{c}{\textbf{How to Use Memory}} \\
\cmidrule(lr){3-8}\cmidrule(lr){9-12}
 & & \multicolumn{2}{c}{Objective Fact} & \multicolumn{2}{c}{Scope Control} & \multicolumn{2}{c}{Evidence Conflict}
   & \multicolumn{2}{c}{Personalized Use} & \multicolumn{2}{c}{Valid Selection} \\
\cmidrule(lr){3-4}\cmidrule(lr){5-6}\cmidrule(lr){7-8}\cmidrule(lr){9-10}\cmidrule(lr){11-12}
 & & Acc.\up & Syco.\dn & Acc.\up & Syco.\dn & Acc.\up & Syco.\dn & Acc.\up & Cor.Mem.\up & Acc.\up & Outd.Mem.\dn \\
\midrule
\multicolumn{12}{c}{\textit{Qwen3-4B}} \\
\midrule
\multirow{2}{*}{RAG} & Base
 & 35.3\sd{3.5} & 74.4\sd{4.4} & 24.4\sd{2.6} & 32.2\sd{1.1} & 57.0\sd{1.7} & 43.0\sd{1.7} & 85.1\sd{1.7} & 90.7\sd{1.7} & 53.7\sd{2.3} & 47.0\sd{2.5} \\
 & UpliftMem
 & \textbf{50.9}\sd{2.4} & \textbf{49.1}\sd{1.6} & \textbf{87.1}\sd{1.3} & \textbf{8.7}\sd{2.0} & \textbf{84.9}\sd{0.6} & \textbf{14.9}\sd{1.0} & \textbf{85.9}\sd{2.8} & \textbf{91.1}\sd{1.1} & \textbf{63.8}\sd{2.1} & \textbf{37.1}\sd{2.2} \\
\cmidrule(lr){1-12}
\multirow{2}{*}{A-MEM} & Base
 & 36.7\sd{1.1} & 66.3\sd{0.6} & 48.4\sd{1.3} & 34.0\sd{2.7} & 46.2\sd{2.2} & 53.8\sd{2.2} & 82.2\sd{3.8} & 92.2\sd{2.2} & 55.2\sd{1.9} & 47.0\sd{3.1} \\
 & UpliftMem
 & \textbf{46.9}\sd{0.9} & \textbf{52.0}\sd{2.3} & \textbf{92.7}\sd{2.0} & \textbf{4.7}\sd{1.8} & \textbf{84.0}\sd{2.8} & \textbf{15.6}\sd{2.9} & \textbf{84.0}\sd{2.9} & \textbf{92.7}\sd{1.1} & \textbf{66.1}\sd{1.4} & \textbf{33.7}\sd{1.1} \\
\cmidrule(lr){1-12}
\multirow{2}{*}{Mem0} & Base
 & 52.2\sd{2.6} & 54.0\sd{2.4} & 31.5\sd{1.7} & 28.9\sd{6.8} & 49.6\sd{3.6} & 44.4\sd{2.1} & 58.1\sd{1.3} & \textbf{72.7}\sd{2.2} & 46.7\sd{1.6} & 53.7\sd{1.5} \\
 & UpliftMem
 & \textbf{57.8}\sd{2.4} & \textbf{40.7}\sd{2.8} & \textbf{66.3}\sd{4.5} & \textbf{18.7}\sd{2.1} & \textbf{62.2}\sd{4.8} & \textbf{33.0}\sd{5.5} & \textbf{59.0}\sd{0.6} & 72.4\sd{2.3} & \textbf{55.2}\sd{3.1} & \textbf{46.0}\sd{3.2} \\
\cmidrule(lr){1-12}
\multirow{2}{*}{MemGPT} & Base
 & 34.4\sd{2.4} & 77.4\sd{4.2} & 61.1\sd{4.0} & 23.7\sd{1.3} & 43.1\sd{2.9} & 55.8\sd{2.1} & 81.9\sd{2.6} & 91.1\sd{2.2} & 61.0\sd{2.5} & 39.4\sd{2.9} \\
 & UpliftMem
 & \textbf{43.1}\sd{2.9} & \textbf{60.7}\sd{3.6} & \textbf{84.8}\sd{0.6} & \textbf{10.4}\sd{2.6} & \textbf{69.3}\sd{2.2} & \textbf{27.8}\sd{1.9} & \textbf{82.9}\sd{1.3} & \textbf{92.2}\sd{1.1} & \textbf{65.9}\sd{3.0} & \textbf{35.6}\sd{3.2} \\
\cmidrule(lr){1-12}
\multirow{2}{*}{LightMem} & Base
 & 52.6\sd{1.3} & 57.8\sd{1.9} & 10.4\sd{3.2} & 44.1\sd{2.3} & 42.0\sd{2.1} & 45.8\sd{3.1} & 70.0\sd{2.7} & 80.2\sd{1.8} & 54.9\sd{1.1} & 47.9\sd{1.1} \\
 & UpliftMem
 & \textbf{56.2}\sd{1.1} & \textbf{46.1}\sd{1.4} & \textbf{56.4}\sd{1.6} & \textbf{24.0}\sd{3.3} & \textbf{55.1}\sd{1.9} & \textbf{35.8}\sd{2.0} & \textbf{71.3}\sd{1.8} & \textbf{81.1}\sd{2.1} & \textbf{56.8}\sd{1.9} & \textbf{44.3}\sd{1.5} \\
\midrule
\multicolumn{12}{c}{\textit{Qwen3-8B}} \\
\midrule
\multirow{2}{*}{RAG} & Base
 & 42.2\sd{2.2} & 52.0\sd{1.6} & 53.7\sd{2.3} & 3.7\sd{1.7} & 39.6\sd{0.6} & 60.4\sd{0.6} & \textbf{84.0}\sd{1.3} & \textbf{82.0}\sd{1.4} & 57.8\sd{1.1} & 43.2\sd{0.5} \\
 & UpliftMem
 & \textbf{51.8}\sd{1.7} & \textbf{41.2}\sd{1.0} & \textbf{82.4}\sd{1.1} & \textbf{0.4}\sd{1.1} & \textbf{94.2}\sd{0.9} & \textbf{5.8}\sd{0.9} & 82.6\sd{0.6} & 78.9\sd{1.1} & \textbf{73.9}\sd{0.9} & \textbf{26.1}\sd{1.7} \\
\cmidrule(lr){1-12}
\multirow{2}{*}{A-MEM} & Base
 & 41.1\sd{4.4} & 59.6\sd{2.3} & 71.6\sd{2.2} & 0.9\sd{1.2} & 38.5\sd{1.3} & 61.5\sd{1.3} & 82.7\sd{1.0} & 82.2\sd{1.6} & 56.2\sd{4.9} & 47.0\sd{4.0} \\
 & UpliftMem
 & \textbf{62.4}\sd{3.2} & \textbf{28.0}\sd{3.1} & \textbf{85.6}\sd{1.9} & \textbf{0.4}\sd{0.6} & \textbf{82.9}\sd{1.0} & \textbf{17.1}\sd{1.0} & \textbf{84.1}\sd{1.9} & \textbf{82.6}\sd{2.6} & \textbf{78.1}\sd{3.2} & \textbf{23.8}\sd{2.8} \\
\cmidrule(lr){1-12}
\multirow{2}{*}{Mem0} & Base
 & 50.7\sd{1.7} & 48.5\sd{2.8} & 30.7\sd{2.0} & 8.1\sd{1.7} & 25.3\sd{2.1} & 60.0\sd{2.2} & 62.4\sd{3.4} & 59.8\sd{2.4} & 61.3\sd{2.2} & \textbf{39.0}\sd{3.4} \\
 & UpliftMem
 & \textbf{59.8}\sd{2.1} & \textbf{29.1}\sd{2.7} & \textbf{65.2}\sd{4.6} & \textbf{4.6}\sd{3.6} & \textbf{59.3}\sd{1.3} & \textbf{9.3}\sd{2.2} & \textbf{64.8}\sd{2.3} & \textbf{61.7}\sd{1.7} & \textbf{64.8}\sd{1.0} & 39.9\sd{1.2} \\
\cmidrule(lr){1-12}
\multirow{2}{*}{MemGPT} & Base
 & 37.4\sd{2.3} & 61.9\sd{3.2} & 93.6\sd{1.8} & 1.5\sd{1.7} & 7.8\sd{1.9} & 92.2\sd{1.9} & 81.8\sd{2.0} & 80.0\sd{1.6} & 71.2\sd{1.8} & 31.8\sd{2.7} \\
 & UpliftMem
 & \textbf{51.6}\sd{3.0} & \textbf{44.7}\sd{1.2} & \textbf{98.8}\sd{1.3} & \textbf{0.0}\sd{0.0} & \textbf{36.9}\sd{1.2} & \textbf{18.4}\sd{1.3} & \textbf{86.4}\sd{2.6} & \textbf{85.3}\sd{4.5} & \textbf{71.6}\sd{2.5} & \textbf{31.6}\sd{2.9} \\
\cmidrule(lr){1-12}
\multirow{2}{*}{LightMem} & Base
 & 55.2\sd{0.6} & 43.0\sd{1.3} & 31.1\sd{1.9} & 7.8\sd{1.1} & 47.4\sd{1.7} & 39.6\sd{1.7} & 74.1\sd{1.7} & 71.1\sd{1.1} & 58.9\sd{2.9} & 43.2\sd{3.1} \\
 & UpliftMem
 & \textbf{59.7}\sd{1.4} & \textbf{34.9}\sd{1.3} & \textbf{60.0}\sd{1.9} & \textbf{5.1}\sd{1.0} & \textbf{81.8}\sd{1.7} & \textbf{8.2}\sd{1.3} & \textbf{75.1}\sd{2.4} & \textbf{72.4}\sd{1.2} & \textbf{65.0}\sd{2.0} & \textbf{37.3}\sd{2.0} \\
\bottomrule
\end{tabular}}
\caption{\textbf{Per-system MemSyco results at $K{=}5$.} Native retrieval (Base) and UpliftMem use identical memory stores. Entries are means over five inference runs, with $\pm$ denoting the sample standard deviation. Arrows indicate the preferred metric direction, and bold marks the better value within each pair.}
\label{tab:app-main-k5}
\end{table*}

\begin{table*}[t]
\centering\small
\setlength{\tabcolsep}{4pt}
\resizebox{\textwidth}{!}{%
\begin{tabular}{llcccccc}
\toprule
\multirow{2}{*}{Backbone} & \multirow{2}{*}{$K$} &
\multicolumn{3}{c}{When to Use Memory} & \multicolumn{2}{c}{How to Use Memory} & \multirow{2}{*}{Overall} \\
\cmidrule(lr){3-5}\cmidrule(lr){6-7}
 & & Obj.\ Fact & Scope Ctrl. & Evid.\ Conf. & Person.\ Use & Valid Sel. & \\
\midrule
 \multirow{3}{*}{Qwen3-4B} &  3 & 13.3 & \textbf{33.7} & 19.9 & \textbf{4.3} & 7.2 & \textbf{15.7} \\
  &  5 & 12.5 & 30.8 & 23.3 & 0.8 & 7.5 & 15.0 \\
  & 10 & \textbf{14.0} & 15.8 & \textbf{24.3} & 2.5 & \textbf{8.8} & 13.1 \\
\midrule
 \multirow{3}{*}{Qwen3-8B} &  3 & 13.6 & \textbf{12.4} & 25.0 & \textbf{2.8} & \textbf{9.7} & 12.7 \\
  &  5 & \textbf{14.6} & 12.3 & 45.1 & 1.4 & 9.4 & 16.5 \\
  & 10 & 14.6 & 7.1 & \textbf{53.7} & 1.7 & 8.4 & \textbf{17.1} \\
\midrule
 \multirow{3}{*}{Cross-model avg.} &  3 & 13.4 & \textbf{23.0} & 22.4 & \textbf{3.5} & 8.4 & 14.2 \\
  &  5 & 13.5 & 21.5 & 34.2 & 1.1 & 8.4 & \textbf{15.8} \\
  & 10 & \textbf{14.3} & 11.4 & \textbf{39.0} & 2.1 & \textbf{8.6} & 15.1 \\
\bottomrule
\end{tabular}}
\caption{\textbf{Retrieval-budget sensitivity on MemSyco-Bench.} Entries report direction-aligned UpliftMem gains over native retrieval in percentage points. Each task score averages its two metrics and five memory stores, and Overall averages the five task scores. The cross-model average weights Qwen3-4B and Qwen3-8B equally. Bold marks the largest gain within each backbone and column across budgets.}
\label{tab:app-k-sens}
\end{table*}

\begin{table*}
\centering\small
\setlength{\tabcolsep}{3pt}
\resizebox{\textwidth}{!}{%
\begin{tabular}{llccccccccccc}
\toprule
\multirow{3}{*}{Memory System} & \multirow{3}{*}{Variant} &
\multicolumn{6}{c}{\textbf{When to Use Memory}} & \multicolumn{4}{c}{\textbf{How to Use Memory}} & \multirow{3}{*}{Avg.$\Delta$} \\
\cmidrule(lr){3-8}\cmidrule(lr){9-12}
 & & \multicolumn{2}{c}{Objective Fact} & \multicolumn{2}{c}{Scope Control} & \multicolumn{2}{c}{Evidence Conflict}
   & \multicolumn{2}{c}{Personalized Use} & \multicolumn{2}{c}{Valid Selection} & \\
\cmidrule(lr){3-4}\cmidrule(lr){5-6}\cmidrule(lr){7-8}\cmidrule(lr){9-10}\cmidrule(lr){11-12}
 & & Acc.\up & Syco.\dn & Acc.\up & Syco.\dn & Acc.\up & Syco.\dn & Acc.\up & Cor.Mem.\up & Acc.\up & Outd.Mem.\dn & \\
\midrule
\multirow{6}{*}{RAG}
 & \textbf{Full UpliftMem} & \textbf{50.9} & \textbf{49.1} & \textbf{87.1} & 8.7 & \textbf{84.9} & \textbf{14.9} & \textbf{85.9} & \textbf{91.1} & \textbf{63.8} & 37.1 & -- \\
 & Base reranker    & 41.5 & 65.2 & 12.6 & 48.1 & 65.6 & 34.4 & 69.6 & 79.3 & 54.3 & 47.0 & 22.6 \\
 & Warm-up only     & 41.5 & 64.1 & 59.3 & 24.8 & 61.5 & 38.5 & 69.6 & 82.2 & 54.0 & 47.3 & 16.1 \\
 & Random probing   & 43.7 & 58.1 & 77.8 & \textbf{8.1} & 61.5 & 38.5 & 76.3 & 83.0 & 63.5 & \textbf{36.8} & \phantom{0}9.0 \\
 & w/o policy loss   & 46.7 & 57.0 & 73.3 & 17.0 & 70.4 & 29.6 & 77.8 & 84.4 & 59.4 & 41.6 & \phantom{0}8.7 \\
 & w/o value loss  & 40.7 & 65.2 & 71.1 & 16.7 & 64.1 & 35.9 & 81.1 & 89.6 & 62.2 & 40.3 & 10.3 \\
\midrule
\multirow{6}{*}{A-MEM}
 & \textbf{Full UpliftMem} & \textbf{46.9} & \textbf{52.0} & \textbf{92.7} & \textbf{4.7} & \textbf{84.0} & \textbf{15.6} & \textbf{84.0} & \textbf{92.7} & \textbf{66.1} & \textbf{33.7} & -- \\
 & Base reranker    & 40.4 & 70.4 & 14.4 & 55.9 & 65.6 & 34.4 & 65.9 & 75.6 & 55.2 & 47.0 & 25.1 \\
 & Warm-up only     & 41.9 & 63.7 & 59.3 & 29.3 & 50.0 & 50.0 & 77.0 & 83.7 & 55.2 & 47.3 & 18.4 \\
 & Random probing   & 40.7 & 55.6 & 69.6 & 17.8 & 60.4 & 39.6 & 82.6 & 89.6 & 59.7 & 41.6 & 11.2 \\
 & w/o policy loss   & 39.3 & 61.1 & 86.3 & 11.1 & 67.8 & 32.2 & 79.3 & 87.8 & 62.9 & 37.5 & \phantom{0}7.9 \\
 & w/o value loss  & 39.3 & 67.4 & 68.5 & 23.7 & 58.9 & 41.1 & 83.1 & 90.4 & 63.2 & 38.4 & 12.8 \\
\midrule
\multirow{6}{*}{Mem0}
 & \textbf{Full UpliftMem} & \textbf{57.8} & \textbf{40.7} & \textbf{66.3} & 18.7 & \textbf{62.2} & \textbf{33.0} & \textbf{59.0} & \textbf{72.4} & \textbf{55.2} & \textbf{46.0} & -- \\
 & Base reranker    & 56.8 & 48.5 & 41.9 & 25.2 & 52.6 & 38.1 & 53.7 & 66.3 & 51.4 & 48.9 & \phantom{0}7.3 \\
 & Warm-up only     & 53.3 & 48.1 & 45.9 & 23.7 & 50.0 & 40.0 & 57.4 & 65.6 & 48.3 & 52.4 & \phantom{0}7.8 \\
 & Random probing   & 52.2 & 50.7 & 62.3 & 19.3 & 54.8 & 37.4 & 51.9 & 63.0 & 53.0 & 47.7 & \phantom{0}5.2 \\
 & w/o policy loss   & 51.1 & 50.7 & 54.8 & 29.6 & 53.3 & 39.6 & 50.7 & 58.9 & 54.0 & 46.3 & \phantom{0}7.8 \\
 & w/o value loss  & 55.9 & 48.9 & 55.6 & 24.8 & 61.1 & 35.9 & 53.7 & 64.8 & 50.5 & 50.2 & \phantom{0}5.3 \\
\midrule
\multirow{6}{*}{MemGPT}
 & \textbf{Full UpliftMem} & \textbf{43.1} & \textbf{60.7} & \textbf{84.8} & \textbf{10.4} & \textbf{69.3} & \textbf{27.8} & \textbf{82.9} & \textbf{92.2} & \textbf{65.9} & \textbf{35.6} & -- \\
 & Base reranker    & 30.0 & 73.7 & 73.0 & 15.9 & 40.0 & 60.0 & 78.5 & 89.6 & 63.5 & 39.0 & 11.8 \\
 & Warm-up only     & 32.2 & 75.6 & 75.9 & 14.1 & 51.9 & 47.4 & 81.5 & 91.9 & 60.3 & 41.6 & \phantom{0}8.9 \\
 & Random probing   & 30.4 & 72.2 & 83.3 & 11.9 & 47.0 & 51.9 & 80.3 & 91.3 & 59.7 & 41.6 & \phantom{0}8.9 \\
 & w/o policy loss   & 30.0 & 70.4 & 81.5 & 11.1 & 52.2 & 47.0 & 80.7 & 91.5 & 61.0 & 40.6 & \phantom{0}7.6 \\
 & w/o value loss  & 31.9 & 73.3 & 80.7 & 12.6 & 56.3 & 43.7 & 82.2 & 92.2 & 58.7 & 42.9 & \phantom{0}7.4 \\
\midrule
\multirow{6}{*}{LightMem}
 & \textbf{Full UpliftMem} & \textbf{56.2} & \textbf{46.1} & \textbf{56.4} & \textbf{24.0} & \textbf{55.1} & \textbf{35.8} & \textbf{71.3} & \textbf{81.1} & \textbf{56.8} & \textbf{44.3} & -- \\
 & Base reranker    & 55.9 & 53.3 & \phantom{0}8.1 & 46.3 & 48.1 & 38.5 & 57.0 & 68.9 & 51.1 & 50.5 & 12.6 \\
 & Warm-up only     & 52.6 & 55.6 & 22.2 & 45.2 & 46.3 & 42.6 & 63.3 & 75.6 & 50.8 & 49.5 & 10.9 \\
 & Random probing   & 55.9 & 47.4 & 51.8 & 25.6 & 47.0 & 45.9 & 66.3 & 74.4 & 55.8 & 44.8 & \phantom{0}3.9 \\
 & w/o policy loss   & 53.3 & 47.8 & 48.9 & 31.1 & 41.5 & 52.2 & 58.5 & 74.1 & 47.3 & 53.0 & \phantom{0}8.7 \\
 & w/o value loss  & 55.2 & 55.6 & 45.6 & 36.7 & 51.9 & 42.6 & 65.6 & 78.1 & 55.6 & 46.0 & \phantom{0}5.6 \\
\bottomrule
\end{tabular}}
\caption{\textbf{Per-system ablations at $K{=}5$ (Qwen3-4B).} Full UpliftMem is averaged over five inference runs, while the ablation variants are averaged over three. Avg.$\Delta$ is the mean direction-aligned difference between Full and each variant across the ten metrics, in percentage points, with positive values favoring Full. Bold marks the best value within each memory system and metric.}
\label{tab:app-abl-k5-4b}
\end{table*}

\begin{table*}[t]
\centering\small
\setlength{\tabcolsep}{3pt}
\resizebox{\textwidth}{!}{%
\begin{tabular}{llccccccccccc}
\toprule
\multirow{3}{*}{Memory System} & \multirow{3}{*}{Variant} &
\multicolumn{6}{c}{\textbf{When to Use Memory}} & \multicolumn{4}{c}{\textbf{How to Use Memory}} & \multirow{3}{*}{Avg.$\Delta$} \\
\cmidrule(lr){3-8}\cmidrule(lr){9-12}
 & & \multicolumn{2}{c}{Objective Fact} & \multicolumn{2}{c}{Scope Control} & \multicolumn{2}{c}{Evidence Conflict}
   & \multicolumn{2}{c}{Personalized Use} & \multicolumn{2}{c}{Valid Selection} & \\
\cmidrule(lr){3-4}\cmidrule(lr){5-6}\cmidrule(lr){7-8}\cmidrule(lr){9-10}\cmidrule(lr){11-12}
 & & Acc.\up & Syco.\dn & Acc.\up & Syco.\dn & Acc.\up & Syco.\dn & Acc.\up & Cor.Mem.\up & Acc.\up & Outd.Mem.\dn & \\
\midrule
\multirow{6}{*}{RAG}
 & \textbf{Full UpliftMem} & \textbf{51.8} & \textbf{41.2} & \textbf{82.4} & \textbf{0.4} & \textbf{94.2} & \textbf{5.8} & \textbf{82.6} & \textbf{78.9} & \textbf{73.9} & \textbf{26.1} & -- \\
 & Base reranker    & 47.0 & 51.5 & 45.2 & 1.9 & 40.0 & 60.0 & 74.1 & 70.0 & 61.9 & 38.4 & 20.4 \\
 & Warm-up only     & 48.5 & 44.1 & 61.1 & 1.5 & 28.5 & 71.5 & 77.4 & 74.1 & 61.0 & 40.3 & 19.7 \\
 & Random probing   & 48.9 & 42.2 & 81.2 & \textbf{0.4} & 31.5 & 68.5 & 78.5 & 75.9 & 64.4 & 38.7 & 16.0 \\
 & w/o policy loss   & 49.6 & 48.1 & 73.7 & 1.1 & 30.4 & 69.6 & 74.8 & 72.6 & 64.8 & 35.6 & 17.9 \\
 & w/o value loss  & 43.7 & 49.6 & 60.4 & 1.1 & 53.0 & 47.0 & 80.4 & 78.1 & 65.1 & 37.5 & 14.5 \\
\midrule
\multirow{6}{*}{A-MEM}
 & \textbf{Full UpliftMem} & \textbf{62.4} & \textbf{28.0} & \textbf{85.6} & \textbf{0.4} & \textbf{82.9} & \textbf{17.1} & \textbf{84.1} & \textbf{82.6} & \textbf{78.1} & \textbf{23.8} & -- \\
 & Base reranker    & 38.5 & 57.4 & 41.9 & 2.2 & 68.9 & 31.1 & 79.3 & 78.5 & 54.0 & 48.3 & 18.4 \\
 & Warm-up only     & 43.0 & 54.1 & 80.4 & 0.7 & 38.5 & 61.5 & 83.3 & 81.1 & 54.0 & 45.7 & 18.8 \\
 & Random probing   & 54.4 & 38.5 & 77.4 & \textbf{0.4} & 49.3 & 50.7 & 78.5 & 76.7 & 63.2 & 41.3 & 13.8 \\
 & w/o policy loss   & 53.7 & 41.5 & 84.4 & 1.1 & 49.6 & 50.4 & 80.7 & 79.3 & 61.9 & 39.0 & 12.9 \\
 & w/o value loss  & 42.6 & 55.2 & 85.2 & 2.2 & 52.2 & 47.8 & 83.3 & 80.4 & 64.1 & 37.8 & 14.2 \\
\midrule
\multirow{6}{*}{Mem0}
 & \textbf{Full UpliftMem} & \textbf{59.8} & \textbf{29.1} & \textbf{65.2} & \textbf{4.6} & \textbf{59.3} & \textbf{9.3} & \textbf{64.8} & \textbf{61.7} & \textbf{64.8} & 39.9 & -- \\
 & Base reranker    & 50.4 & 50.0 & 32.6 & 11.1 & 21.9 & 62.6 & 57.8 & 53.7 & 64.1 & \textbf{35.9} & 17.2 \\
 & Warm-up only     & 46.7 & 49.3 & 45.6 & 4.8 & 45.9 & 27.4 & 51.1 & 49.3 & 54.6 & 43.5 & 12.5 \\
 & Random probing   & 54.8 & 34.1 & 52.6 & 7.0 & 44.1 & 23.7 & 56.7 & 55.2 & 49.2 & 48.6 & \phantom{0}9.4 \\
 & w/o policy loss   & 57.8 & 34.1 & 63.7 & 4.8 & 58.1 & 23.3 & 59.3 & 55.9 & 51.4 & 47.3 & \phantom{0}5.6 \\
 & w/o value loss  & 50.4 & 47.8 & 56.7 & 11.9 & 37.0 & 47.0 & 63.7 & 59.7 & 56.8 & 41.9 & 11.7 \\
\midrule
\multirow{6}{*}{MemGPT}
 & \textbf{Full UpliftMem} & \textbf{51.6} & \textbf{44.7} & \textbf{98.8} & \textbf{0.0} & \textbf{36.9} & \textbf{18.4} & \textbf{86.4} & \textbf{85.3} & \textbf{71.6} & \textbf{31.6} & -- \\
 & Base reranker    & 44.4 & 54.1 & 88.9 & 1.5 & 11.9 & 87.8 & 80.7 & 78.1 & 70.2 & 35.2 & 14.0 \\
 & Warm-up only     & 41.5 & 51.5 & 92.6 & 0.7 & 25.6 & 74.4 & 85.9 & 82.6 & 64.1 & 37.5 & 10.8 \\
 & Random probing   & 41.5 & 56.7 & 93.3 & 0.7 & 25.6 & 74.4 & 85.0 & 84.4 & 62.9 & 40.3 & 11.5 \\
 & w/o policy loss   & 45.2 & 50.4 & 93.3 & 1.1 & 28.9 & 71.1 & 76.7 & 73.7 & 65.4 & 38.1 & 11.3 \\
 & w/o value loss  & 39.6 & 57.4 & 96.3 & 0.4 & 35.9 & 63.3 & 81.9 & 79.3 & 64.4 & 38.1 & \phantom{0}9.8 \\
\midrule
\multirow{6}{*}{LightMem}
 & \textbf{Full UpliftMem} & \textbf{59.7} & \textbf{34.9} & \textbf{60.0} & \textbf{5.1} & \textbf{81.8} & \textbf{8.2} & \textbf{75.1} & \textbf{72.4} & \textbf{65.0} & \textbf{37.3} & -- \\
 & Base reranker    & 54.1 & 48.1 & 21.9 & 5.6 & 60.4 & 25.9 & 59.6 & 56.3 & 61.0 & 40.3 & 13.5 \\
 & Warm-up only     & 49.3 & 49.6 & 30.7 & 5.6 & 54.1 & 23.7 & 68.1 & 67.0 & 55.9 & 44.8 & 12.7 \\
 & Random probing   & 55.3 & 39.8 & 51.1 & 8.5 & 62.2 & 24.1 & 73.3 & 70.7 & 61.9 & 37.8 & \phantom{0}6.4 \\
 & w/o policy loss   & 54.4 & 41.5 & 36.3 & 6.7 & 62.2 & 34.4 & 68.1 & 65.9 & 57.5 & 41.6 & 10.8 \\
 & w/o value loss  & 53.3 & 45.9 & 37.8 & 6.7 & 53.0 & 45.6 & 67.4 & 64.8 & 63.5 & 39.4 & 12.6 \\
\bottomrule
\end{tabular}}
\caption{\textbf{Per-system ablations at $K{=}5$ (Qwen3-8B).} Metrics and reporting conventions follow Table~\ref{tab:app-abl-k5-4b}.}
\label{tab:app-abl-k5-8b}
\end{table*}

\end{document}